\newtheorem*{rep@theorem}{\rep@title}
\newcommand{\newreptheorem}[2]{%
	\newenvironment{rep#1}[1]{%
		\def\rep@title{#2 \ref{##1}}%
		\begin{rep@theorem}}%
		{\end{rep@theorem}}}
\newcommand{\norm}[1]{\left\Vert#1\right\Vert}
\newcommand{\abs}[1]{\left\vert#1\right\vert}
\newcommand{\set}[1]{\left\{#1\right\}}
\newcommand{\parr}[1]{\left (#1\right )}
\newcommand{\Real}{\mathbb R}
\newcommand{\Nat}{\mathbb N}
\newcommand{\too}{\rightarrow}
\newcommand{\one}{\mathbf{1}}
\newcommand{\zero}{\mathbf{0}}
\newcommand{\eg}{{e.g.}}
\newcommand{\ie}{{i.e.}}
\newtheorem{theorem}{Theorem}
\newtheorem{lemma}{Lemma}
\newtheorem{corollary}{Corollary}
\newtheorem{definition}{Definition}
\def\eqref#1{equation~\ref{#1}}
\def\Eqref#1{Equation~\ref{#1}}
\def\ceil#1{\lceil #1 \rceil}
\def\1{\bm{1}}
\def\vb{{\bm{b}}}
\def\vc{{\bm{c}}}
\def\ve{{\bm{e}}}
\def\vf{{\bm{f}}}
\def\vh{{\bm{h}}}
\def\vp{{\bm{p}}}
\def\vq{{\bm{q}}}
\def\vx{{\bm{x}}}
\def\vy{{\bm{y}}}
\def\vz{{\bm{z}}}
\def\mA{{\bm{A}}}
\def\mB{{\bm{B}}}
\def\mD{{\bm{D}}}
\def\mF{{\bm{F}}}
\def\mG{{\bm{G}}}
\def\mH{{\bm{H}}}
\def\mI{{\bm{I}}}
\def\mL{{\bm{L}}}
\def\mP{{\bm{P}}}
\def\mQ{{\bm{Q}}}
\def\mW{{\bm{W}}}
\def\mX{{\bm{X}}}
\def\mY{{\bm{Y}}}
\def\mPhi{{\bm{\Phi}}}
\def\mphi{\bm{\phi}}
\def\mPsi{{\bm{\Psi}}}
\def\mpsi{\bm{\psi}}
\DeclareMathAlphabet{\mathsfit}{\encodingdefault}{\sfdefault}{m}{sl}
\SetMathAlphabet{\mathsfit}{bold}{\encodingdefault}{\sfdefault}{bx}{n}
\def\gN{{\mathcal{N}}}
\newcommand{\R}{\mathbb{R}}
\DeclareMathOperator*{\argmax}{arg\,max}
\newcommand{\subalign}[1]{%
	\vcenter{%
		\Let@ \restore@math@cr \default@tag
		\baselineskip\fontdimen10 \scriptfont\tw@
		\advance\baselineskip\fontdimen12 \scriptfont\tw@
		\lineskip\thr@@\fontdimen8 \scriptfont\thr@@
		\lineskiplimit\lineskip
		\ialign{\hfil$\m@th\scriptstyle##$&$\m@th\scriptstyle{}##$\crcr
			#1\crcr
		}%
	}
}
\title{On Universal Equivariant Set Networks}
\author{Nimrod Segol \& Yaron Lipman  \\
	Department of Computer Science and Applied Mathematics\\
	Weizmann Institute of Science\\
	Rehovot, Israel \\
	\texttt{\{nimrod.segol,yaron.lipman\}@weizmann.ac.il} 
}
\begin{document}

	\maketitle
	
	\begin{abstract}
		Using deep neural networks that are either invariant or equivariant to permutations in order to learn functions on unordered sets has become prevalent. The most popular, basic models are DeepSets \citep{zaheer2017deep} and PointNet \citep{qi2017pointnet}. While known to be universal for approximating invariant functions, DeepSets and PointNet are not known to be universal when approximating \emph{equivariant} set functions. On the other hand, several recent equivariant set architectures have been proven equivariant universal \citep{sannai2019universal,keriven2019universal}, however these models either use layers that are not permutation equivariant (in the standard sense) and/or use higher order tensor variables which are less practical. 
		There is, therefore, a gap in understanding the universality of popular equivariant set models versus theoretical ones. 
		
		In this paper we close this gap by proving that: (i) PointNet is not equivariant universal; and (ii) adding a single linear transmission  layer makes PointNet universal. We call this architecture PointNetST and argue it is the simplest permutation equivariant universal model known to date. Another consequence is that DeepSets is universal, and also PointNetSeg, a popular point cloud segmentation network (used \eg, in \citep{qi2017pointnet}) is universal.
		
		The key theoretical tool used to prove the above results is an explicit characterization of all permutation equivariant polynomial layers. Lastly, we provide numerical experiments validating the theoretical results and comparing different permutation equivariant models.

		\vspace{-0.2cm}
	\end{abstract}
	\section{Introduction}
	
	Many interesting tasks in machine learning can be described by functions $\mF$ that take as input a set, $\mX=(\vx_1,\ldots,\vx_n)$, and output some per-element features or values, $\mF(\mX)=(\mF(\mX)_1,\ldots,\mF(\mX)_n)$. \emph{Permutation equivariance} is the property required of $\mF$ so it is well-defined. Namely, it assures that reshuffling the elements in $\mX$ and applying $\mF$ results in the same output, reshuffled in the same manner. For example, if $\tilde{\mX}=(\vx_2,\vx_1,\vx_3,\ldots,\vx_n)$ then $\mF(\tilde{\mX})=(\mF(\mX)_2,\mF(\mX)_1,\mF(\mX)_3,\ldots,\mF(\mX)_n)$. 
	
	Building neural networks that are permutation equivariant \emph{by construction} proved extremely useful in practice. 
	Arguably the most popular models are DeepSets \cite{zaheer2017deep} and PointNet \cite{qi2017pointnet}. These models enjoy small number of parameters, low memory footprint and computational efficiency along with high empirical expressiveness. 
	Although both DeepSets and PointNet are known to be invariant universal (\ie, can approximate arbitrary invariant continuous functions) they are not known to be equivariant universal (\ie, can approximate arbitrary equivariant continuous functions).

	On the other hand, several researchers have suggested theoretical permutation equivariant models and proved they are equivariant universal. \cite{sannai2019universal} builds a universal equivariant network by taking $n$ copies of $(n-1)$-invariant networks and combines them with a layer that is not permutation invariant in the standard (above mentioned) sense. \cite{keriven2019universal} solves a more general problem of building networks that are equivariant universal over arbitrary high order input tensors $\Real^{n^d}$ (including graphs); their construction, however, uses higher order tensors as hidden variables which is of less practical value.  \cite{yarotsky2018universal} proves that neural networks constructed using a finite set of invariant and equivariant polynomial layers are also equivariant universal, however his network is not explicit (\ie, the polynomials are not characterized for the equivariant case) and also of less practical interest due to the high degree polynomial layers. 
	
	In this paper we close the gap between the practical and theoretical permutation equivariant constructions and prove:
	\begin{theorem}\label{thm:main}\ \
		\begin{enumerate}[(i)]
			\item PointNet is not equivariant universal. 
			\item Adding a single linear transmission layer (\ie, $\mX\mapsto \one \one^T \mX$) to PointNet makes it equivariant universal. 
			\item Using ReLU activation the minimal width required for universal permutation equivariant network satisfies  $\omega\leq k_{out}+k_{in}+{n+k_{in} \choose k_{in}}$. 
		\end{enumerate}
	\end{theorem}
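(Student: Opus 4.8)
A plain PointNet acts on each element independently: every layer has the form $\mX\mapsto(\sigma(\vx_1\mW+\vb),\dots,\sigma(\vx_n\mW+\vb))$, so the whole network realizes a map $\mF(\mX)_i=\phi(\vx_i)$ for a fixed continuous $\phi:\Real^{k_{in}}\to\Real^{k_{out}}$. The plan is to exhibit a continuous equivariant target that no such $\mF$ approximates uniformly on a compact set. Take $n=2$, $k_{in}=k_{out}=1$, $K=[0,1]^2$, and $\mG(\mX)_i=\tfrac12(x_1+x_2)$, which is equivariant (it is the constant-across-rows function given by a symmetric map). If $\sup_{\mX\in K}\max_i|\mF(\mX)_i-\mG(\mX)_i|<\eps$, then $|\phi(a)-\tfrac12(a+b)|<\eps$ for all $a,b\in[0,1]$; fixing $a$ and letting $b$ range over $[0,1]$ forces the single number $\phi(a)$ to be $\eps$-close to an interval of length $\tfrac12$, which is impossible once $\eps<\tfrac14$. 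Hence (i).

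\textbf{Part (ii): one linear transmission layer suffices.}
A PointNetST with one linear transmission layer between two pointwise blocks realizes exactly the maps $\mF(\mX)_i=\phi_2\big(\phi_1(\vx_i),\ \sum_{j=1}^n\phi_1(\vx_j)\big)$ with $\phi_1,\phi_2$ arbitrary MLPs (the sum is $\mX\mapsto\one\one^T\phi_1(\mX)$, constant across rows, while the pre-transmission features $\phi_1(\vx_i)$ are kept alongside it, as an equivariant linear layer permits). The plan has three steps. First, reduce to equivariant polynomials: approximate the continuous equivariant $\mG$ on a compact $K$ by an equivariant polynomial map $\mP$ — approximate $\mG(\mX)_1$, which is invariant under the copy of $S_{n-1}$ fixing the first element, by an $S_{n-1}$-invariant polynomial (Stone--Weierstrass plus symmetrization), then place $\vx_i$ in the distinguished slot to define $\mP(\mX)_i$. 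Second, apply the characterization of equivariant polynomial layers to write $\mP(\mX)_i=g\big(\vx_i,\vu(\mX)\big)$, where $\vu(\mX)=\big(\sum_{j}\vx_j^{\,\beta}\big)_{\beta}$ collects the multi-symmetric power sums over multi-indices $\beta$ up to some degree $N$ and $g$ is a polynomial. Third, realize this by PointNetST: let $\phi_1$ approximate on $K$ the (polynomial, hence continuous) feature map $\vx\mapsto\big(\vx,(\vx^{\,\beta})_\beta\big)$, so that $\sum_j\phi_1(\vx_j)$ exposes $\vu(\mX)$ while $\phi_1(\vx_i)$ exposes $\vx_i$, and let $\phi_2$ approximate $g$ on the resulting compact range. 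The observation that makes a \emph{single} transmission layer enough is that all needed power sums, for every $\beta$, are produced simultaneously by one global sum of a rich-enough pointwise feature map, so no iteration of transmission layers is required. Chaining the two $\eps$-approximations (fixing once and for all a compact set containing $\phi_1(K)$ and its row-sums, so $\phi_2$ is only applied there) completes (ii); the same construction, read off the relevant architectures, gives universality of DeepSets and PointNetSeg.

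\textbf{Part (iii): width bound.}
Re-run the construction of (ii) inside a bounded-width ReLU network. Two refinements are needed. First, for $n$ elements one may restrict to $|\beta|\le n$: the multi-symmetric power sums of degree at most $n$ already generate the ring of multi-symmetric polynomials in $n$ vectors of $\Real^{k_{in}}$, and their number is the number of monomials of degree $\le n$ in $k_{in}$ variables, namely $\binom{n+k_{in}}{k_{in}}$. Second, organize the network into three parallel tracks: a width-$k_{in}$ track carrying $\vx_i$ (serving as the distinguished argument of $g$ and as the auxiliary coordinates an MLP needs so as not to collapse information while building $\phi_1$); a width-$\binom{n+k_{in}}{k_{in}}$ track where $\phi_1$'s monomial features are formed, summed by the transmission layer, and stored; and a width-$k_{out}$ track into which $\phi_2$'s output is accumulated. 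With sufficient depth each track's computation uses only its own neurons (with transient reuse of the others), so the widths add, giving $\omega\le k_{out}+k_{in}+\binom{n+k_{in}}{k_{in}}$ throughout the network.

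\textbf{Main obstacle.}
The heart of the argument is Part (ii), and within it the explicit characterization $\mP(\mX)_i=g(\vx_i,\vu(\mX))$ of equivariant polynomial layers: this is where classical multi-symmetric invariant theory does the work and is precisely what lets a single transmission layer suffice. For Part (iii) the delicate steps are the degree bound $|\beta|\le n$ on the generating power sums and arranging the ReLU implementation so that the three tracks' widths genuinely add rather than compound.
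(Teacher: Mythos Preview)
Your proposal is correct and follows the same route as the paper: the sum/mean counterexample for (i); the reduction to equivariant polynomials via Stone--Weierstrass plus symmetrization, followed by the power-sum characterization (Theorem~\ref{thm:decomposition}) and MLP approximation of the two pointwise pieces for (ii); and the degree bound $|\beta|\le n$ giving $t=\binom{n+k_{in}}{k_{in}}$ features together with a width-additive MLP realization for (iii). The paper phrases (iii) by directly applying Hanin's $d_{in}+d_{out}$ width bound to each of the two MLPs, which is precisely the content of your ``three tracks'' description.
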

	This theorem suggests that, arguably, PointNet with an addition of a single linear layer is the simplest universal equivariant network, able to learn arbitrary continuous equivariant functions of sets. An immediate corollary of this theorem is
	\begin{corollary}
		DeepSets and PointNetSeg are universal. 
	\end{corollary}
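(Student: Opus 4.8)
The plan is to show that the function classes realized by DeepSets and by PointNetSeg each contain — or, on any compact domain, uniformly approximate — the class of functions realized by PointNetST, and then to invoke Theorem~\ref{thm:main}(ii), which states that PointNetST is equivariant universal. Since a uniform approximator of a uniform approximator of a target function is itself a uniform approximator of that target, this is enough; the only care needed is to propagate approximation errors through the remaining (uniformly continuous, on a compact domain) layers.

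For DeepSets, the key observation is purely structural. An equivariant DeepSets layer has the form $\mX \mapsto \sigma(\mX A + \one\one^T\mX B + \one\vb^T)$, that is, an equivariant \emph{linear} layer — of the kind characterized in this paper — followed by a pointwise ReLU, while a PointNet layer is exactly the transmission-free special case $B = 0$. A PointNetST network is, by definition, a composition of PointNet layers together with a single linear transmission layer $\mX\mapsto\one\one^T\mX$ inserted somewhere. Fusing that transmission layer with the pointwise ReLU layer immediately following it yields a map of the form $\mX\mapsto\sigma\big((\one\one^T\mX)A' + \one\vb'^T\big)$, which is again a DeepSets layer (take $A=0$, $B=A'$); all other layers are DeepSets layers with $B=0$. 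Hence every PointNetST function is \emph{already} a DeepSets function, so DeepSets is equivariant universal. (If one insists on the transmission layer being literally $\one\one^T\mX$ with no trailing nonlinearity, the elementary identity $\sigma(z+c\one)-c\one=z$, valid entrywise for $c$ large enough on the compact domain, reproduces it with two ReLU layers.)

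For PointNetSeg, recall the architecture: a pointwise MLP produces per-point local features, a further pointwise MLP followed by a global max-pool produces a single invariant feature vector, this vector is concatenated back to each point's local features, and a final pointwise MLP is applied. This has the shape [pointwise] $\to$ [one global aggregation] $\to$ [pointwise], structurally identical to PointNetST once one checks that the max-pool aggregation can imitate the transmission layer. The transmission layer, applied after a pointwise block $\alpha_0$, outputs the per-point-independent vector $\sum_j\alpha_0(\vx_j)$. By the invariant universality of PointNet's max-pooling construction (noted in the introduction), on any compact set of $n$ points and for any $\eps>0$ there is a pointwise MLP $\alpha$ with $\mathrm{MAX}_j\,\alpha(\vx_j)$ uniformly $\eps$-approximating the continuous symmetric map $(\vx_1,\dots,\vx_n)\mapsto\sum_j\alpha_0(\vx_j)$. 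Since the local branch preserves $\alpha_0(\vx_i)$, PointNetSeg can realize $\big(\alpha_0(\vx_i),\ \text{approximation of }\sum_j\alpha_0(\vx_j)\big)$ at each point, and the final pointwise MLP plays the role of the post-transmission PointNet layers; uniform continuity of that remainder on the relevant compact set converts the $\eps$-error in the aggregate into an $O(\eps)$-error in the output. Thus PointNetSeg uniformly approximates every PointNetST function on compacts, hence is equivariant universal.

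The only step requiring genuine work (as opposed to bookkeeping) is the PointNetSeg step: one must invoke the invariant universality of max-pooling in the precise form "$\mathrm{MAX}_j\,\alpha(\vx_j)$ approximates a prescribed continuous symmetric function of the $n$ points", and then track the error through the final pointwise MLP — routine since all maps are continuous on a compact domain. For DeepSets there is essentially no obstacle: it is a strict generalization of PointNetST, so the corollary there is immediate from Theorem~\ref{thm:main}(ii).
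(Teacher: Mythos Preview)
Your proposal is correct and matches the paper's approach: for DeepSets, both you and the paper observe that PointNetST is literally a special case (the paper's one-line argument is that setting $\mB=0$ in all but one DeepSets layer recovers PointNetST, so your fusion discussion is more work than needed); for PointNetSeg, both invoke the invariant universality of the max-pooling construction (the paper cites Theorem~1 of \cite{qi2017pointnet}) to approximate the global aggregate, then let the final pointwise block play the role of the post-transmission PointNet. The only cosmetic differences are that the paper targets the power-sum polynomials $(s_1,\ldots,s_t)$ directly and reruns the Theorem~\ref{thm:main} construction rather than phrasing it as ``approximate an arbitrary PointNetST'', and that the invariant-universality statement is for $\mPsi(\boldsymbol\max(\mF(\mX)))$ with an outer MLP $\mPsi$, not bare $\mathrm{MAX}_j\,\alpha(\vx_j)$ as you wrote --- but $\mPsi$ absorbs into the first layers of $\mG$, so this is harmless.
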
\label{cor:PoinNetSeg}
	PointNetSeg is a network used often for point cloud segmentation (\eg, in \cite{qi2017pointnet}). One of the benefit of our result is that it provides a simple characterization of universal equivariant architectures that can be used in the network design process to guarantee universality. 
	
	The theoretical tool used for the proof of Theorem \ref{thm:main} is an explicit characterization of the permutation equivariant polynomials over sets of vectors in $\Real^{k}$ using power-sum multi-symmetric polynomials. We prove:  
	\begin{theorem}\label{thm:decomposition}
		Let $\mP:\Real^{n\times k} \too \Real^{n\times l}$ be a permutation equivariant polynomial map. Then, 
		\begin{equation}\label{e:equi_poly}
		\mP(\mX) = \sum_{\abs{\alpha}\leq n} \vb_\alpha \vq_\alpha^T,  
		\end{equation}
		
		where $\vb_\alpha=(\vx_1^\alpha, \ldots, \vx_n^\alpha)^T$, $\vq_\alpha= (q_{\alpha,1},\ldots,q_{\alpha,l})^T$, where $q_{\alpha,j}=q_{\alpha,j}(s_1,\ldots,s_t)$, $t={n+k\choose k}$, are polynomials; $s_j(\mX)=\sum_{i=1}^n \vx_i^{\alpha_j}$ are the power-sum multi-symmetric polynomials. On the other hand every polynomial map $\mP$ satisfying  \Eqref{e:equi_poly} is equivariant. 
	\end{theorem}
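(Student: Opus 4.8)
The ``if'' direction is immediate and I would dispatch it first: if $\mP(\mX)=\sum_{\abs\alpha\le n}\vb_\alpha\vq_\alpha^T$, then replacing $\mX$ by $\sigma\cdot\mX$ (its rows permuted by $\sigma$) permutes the $n$ entries of each $\vb_\alpha$ by $\sigma$ while leaving every power-sum $s_j(\mX)$, hence every coordinate of every $\vq_\alpha$, unchanged; thus $\mP(\sigma\cdot\mX)=\sigma\cdot\mP(\mX)$. For the converse, observe that $S_n$ acts on the target $\Real^{n\times l}$ by permuting rows independently of the columns, so each of the $l$ columns of $\mP$ is itself an equivariant polynomial map $\Real^{n\times k}\too\Real^n$; it therefore suffices to write an arbitrary equivariant polynomial $\vp=(p_1,\ldots,p_n):\Real^{n\times k}\too\Real^n$ in the form $\vp(\mX)=\sum_{\abs\mu\le n}q_\mu(s_1,\ldots,s_t)\,(\vx_1^\mu,\ldots,\vx_n^\mu)^T$ and then reassemble the $l$ scalar coefficients $q_\mu$ into the vectors $\vq_\mu$. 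From equivariance I record two consequences: (a)~for any $\sigma$ fixing the index $1$ one has $p_1(\sigma\cdot\mX)=p_1(\mX)$, and $\sigma\cdot\mX$ keeps $\vx_1$ in the first row while permuting $\vx_2,\ldots,\vx_n$, so $p_1$ is invariant under the subgroup $S_{n-1}$ permuting $\vx_2,\ldots,\vx_n$; and (b)~if $\mX'$ denotes $\mX$ with rows $1$ and $i$ interchanged, then $p_i(\mX)=p_1(\mX')$.

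Next I would invoke the fundamental theorem on multi-symmetric (power-sum) polynomials, valid over $\Real$ since the characteristic is $0$: for $m$ vector variables $\vz_1,\ldots,\vz_m\in\Real^k$, the ring of $S_m$-invariant polynomials is generated by the power sums $\sum_{i=1}^m\vz_i^\gamma$, $1\le\abs\gamma\le m$. With $m=n$ this says the ring $S:=\Real[\vx_1,\ldots,\vx_n]^{S_n}$ of multi-symmetric polynomials is generated by $s_1,\ldots,s_t$. With $m=n-1$, applied to $\vx_2,\ldots,\vx_n$, together with $\Real[\vx_1,\ldots,\vx_n]^{S_{n-1}}=\Real[\vx_1]\otimes\Real[\vx_2,\ldots,\vx_n]^{S_{n-1}}$ and the identity $\sum_{i=2}^n\vx_i^\gamma=s_\gamma(\mX)-\vx_1^\gamma$ (with $s_\gamma$ for $\abs\gamma\le n-1$ among $s_1,\ldots,s_t$), it shows the $S_{n-1}$-invariant polynomial $p_1$ is a polynomial in the coordinates of $\vx_1$ and in $s_1,\ldots,s_t$. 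Collecting terms by the monomial in $\vx_1$ gives $p_1=\sum_\mu \vx_1^\mu\,q_\mu(s_1,\ldots,s_t)$ over a finite, but a priori unbounded, set of multi-indices $\mu$.

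The crux — and the step I expect to be the main obstacle — is a reduction lemma: \emph{every monomial $\vx_1^\mu$ lies in $\sum_{\abs\nu\le n}\vx_1^\nu\,S$} (in fact already in $\sum_{\abs\nu\le n-1}\vx_1^\nu\,S$). I would prove it with a single linear form: for a formal parameter $\vu\in\Real^k$, since $\vx_1$ is one of the $n$ points, $\prod_{i=1}^n(\vu\cdot\vx_1-\vu\cdot\vx_i)=0$; expanding the product (which equals $\sum_{j=0}^n(-1)^je_j(\vu\cdot\vx_1,\ldots,\vu\cdot\vx_n)(\vu\cdot\vx_1)^{n-j}$) yields $(\vu\cdot\vx_1)^n=\sum_{j=1}^n(-1)^{j+1}e_j(\vu\cdot\vx_1,\ldots,\vu\cdot\vx_n)\,(\vu\cdot\vx_1)^{n-j}$. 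Each coefficient $e_j(\vu\cdot\vx_1,\ldots,\vu\cdot\vx_n)$ is symmetric in $\vx_1,\ldots,\vx_n$, so each of its $\vu$-coefficients lies in $S$. Now compare, on both sides, the coefficient of a fixed monomial $\vu^\mu$ with $\abs\mu=n$: on the left it equals $\binom{n}{\mu}\vx_1^\mu$ with a nonzero multinomial coefficient, and on the right it is a combination of monomials $\vx_1^\lambda$ with $\abs\lambda\le n-1$ and coefficients in $S$. Hence $\vx_1^\mu\in\sum_{\abs\lambda\le n-1}\vx_1^\lambda\,S$ whenever $\abs\mu=n$, and an induction on $\abs\mu$ (multiply through by a coordinate of $\vx_1$ and re-apply the $\abs\mu=n$ case) extends this to all $\mu$. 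Substituting these relations into $p_1=\sum_\mu\vx_1^\mu q_\mu(s_1,\ldots,s_t)$ produces $p_1=\sum_{\abs\mu\le n}\vx_1^\mu\,q_\mu(s_1,\ldots,s_t)$.

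Finally I close the loop with consequence (b): because the $s_j$ are $S_n$-invariant we have $s_j(\mX')=s_j(\mX)$, and the first row of $\mX'$ is $\vx_i$, so $p_i(\mX)=p_1(\mX')=\sum_{\abs\mu\le n}\vx_i^\mu\,q_\mu(s_1,\ldots,s_t)$. That is, $\vp(\mX)=\sum_{\abs\mu\le n}q_\mu(s_1,\ldots,s_t)\,(\vx_1^\mu,\ldots,\vx_n^\mu)^T$, and assembling the $l$ columns yields \eqref{e:equi_poly}. The delicate points to nail down are the precise statement of (and a reference for) the fundamental theorem on multi-symmetric polynomials in general dimension $k$, and the bookkeeping in the coefficient comparison that yields the degree bound; the rest is routine.
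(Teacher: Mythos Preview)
Your proof is correct and follows essentially the same skeleton as the paper's: reduce to $l=1$; use equivariance to show $p_1$ is $S_{n-1}$-invariant and that $p_i$ is obtained from $p_1$ by swapping rows $1$ and $i$; rewrite $p_1$ via the fundamental theorem on multi-symmetric polynomials for $\vx_2,\ldots,\vx_n$ and the substitution $\sum_{i\ge2}\vx_i^\gamma=s_\gamma(\mX)-\vx_1^\gamma$; then cap the $\vx_1$-degree.

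The one genuine difference is in that last step. The paper does not prove the degree bound $\abs{\alpha}\le n$; it cites Corollary~5 of Briand (2004). You instead supply an explicit, self-contained argument: the polynomial identity $\prod_{i=1}^n(\vu\cdot\vx_1-\vu\cdot\vx_i)=0$ (the $i=1$ factor vanishes), expanded via elementary symmetric functions, yields after comparing the coefficient of each $\vu^\mu$ with $\abs\mu=n$ the relation $\binom{n}{\mu}\vx_1^\mu\in\sum_{\abs\lambda\le n-1}\vx_1^\lambda\,S$, whence induction on $\abs\mu$ finishes. This polarization trick is elementary and, incidentally, gives the slightly sharper bound $\abs\alpha\le n-1$. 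What the paper's route buys is brevity (one citation); what yours buys is a fully self-contained proof of Theorem~\ref{thm:decomposition} modulo only the standard generating theorem for the multi-symmetric invariant ring, which both proofs invoke anyway.
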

	This theorem, which extends Proposition 2.27 in \cite{Golubitsky2002TheSP}  to sets of vectors using multivariate polynomials, lends itself to expressing arbitrary equivariant polynomials as a composition of entry-wise continuous functions and a single linear transmission, which in turn facilitates the proof of Theorem \ref{thm:main}.  
	
	We conclude the paper by numerical experiments validating the theoretical results and testing several permutation equivariant networks for the tasks of set classification and regression.

	\vspace{-0.4cm}
	\section{Preliminaries}\label{s:prelim}
	
	\paragraph{Equivariant maps.}
	Vectors $\vx\in\Real^k$ are by default column vectors; $\zero,\one$ are the all zero and all one vectors/tensors; $\ve_i$ is the $i$-th standard basis vector; $\mI$ is the identity matrix; all dimensions are inferred from context or mentioned explicitly. We represent a set of $n$ vectors in $\Real^k$ as a matrix $\mX\in\Real^{n\times k}$ and denote $\mX=(\vx_1,\vx_2,\ldots,\vx_n)^T$, where $\vx_i\in\Real^k$, $i\in [n]$, are the columns of $\mX$.  We denote by $S_n$ the permutation group of $[n]$; its action on $\mX$ is defined by $\sigma\cdot \mX = (\vx_{\sigma^{-1}(1)},\vx_{\sigma^{-1}(2)},\ldots,\vx_{\sigma^{-1}(n)})^T$, $\sigma\in S_n$. That is, $\sigma$ is reshuffling the rows of $\mX$. The natural class of maps assigning a value or feature vector to every element in an input set is permutation equivariant maps:
	\begin{definition}
		A map $\mF:\Real^{n\times k} \too \Real^{n\times l}$ satisfying $\mF(\sigma \cdot \mX) = \sigma \cdot \mF(\mX)$ for all $\sigma\in S_n$ and $\mX\in\Real^{n\times d}$ is called permutation equivariant.
	\end{definition}

	\paragraph{Power-sum multi-symmetric polynomials.}

	Given a vector $\vz=(z_1,\ldots,z_n)\in\Real^n$ the power-sum symmetric polynomials $s_j(\vz)=\sum_{i=1}^n z_i^j$, with $j \in [n]$, uniquely characterize $\vz$ up to permuting its entries. In other words, for $\vz,\vy\in\Real^n$ we have $\vy = \sigma \cdot \vz$ for some $\sigma\in S_n$ if and only if $s_j(\vy)=s_j(\vz)$ for all $j\in [n]$. An equivalent property is that every $S_n$ invariant polynomial $p$ can be expressed as a polynomial in the power-sum symmetric polynomials, \ie, $p(\vz)=q(s_1(\vz),\ldots,s_n(\vz))$, see  \cite{rydh2007minimal} Corollary 8.4, \cite{Briand04} Theorem 3. 
	This fact was previously used in \cite{zaheer2017deep} to prove that DeepSets is universal for invariant functions. We extend this result to equivariant functions and the multi-feature (sets of vectors) case. 
	
	For a vector $\vx\in\Real^k$ and a multi-index vector $\alpha=(\alpha_1,\ldots,\alpha_k)\in \Nat^k$ we define $\vx^\alpha=x_1^{\alpha_1}\cdots x_k^{\alpha_k}$, and $\abs{\alpha}=\sum_{i\in [k]} \alpha_i$. A generalization of the power-sum symmetric polynomials to \emph{matrices} exists and is called power-sum multi-symmetric polynomials, defined with a bit of notation abuse: $s_\alpha(\mX)=\sum_{i=1}^n \vx_i^\alpha$, where $\alpha\in\Nat^k$ is a multi-index satisfying $\abs{\alpha}\leq n$. Note that the number of power-sum multi-symmetric polynomials acting on $\mX\in\Real^{n\times k}$ is $t={n+k \choose k}$.  For notation simplicity let $\alpha_1,\ldots,\alpha_t$ be a list of all $\alpha\in\Nat^k$ with $\abs{\alpha}\leq n$. Then we index the collection of power-sum multi-symmetric polynomials as $s_1,\ldots,s_t$. 
	
	Similarly to the vector case the numbers $s_j(\mX)$, $j\in [t]$ characterize $\mX$ up to permutation of its rows. That is $\mY=\sigma \cdot \mX$ for some $\sigma\in S_n$ iff $s_j(\mY)=s_j(\mY)$ for all $j\in [t]$. Furthermore, every $S_n$ invariant polynomial $p:\Real^{n\times k}\too \Real$ can be expressed as a polynomial in the power-sum multi-symmetric polynomials (see \citep{rydh2007minimal} corollary 8.4), \ie,
	\begin{equation}\label{e:multi_inv}
	p(\mX)=q(s_1(\mX),\ldots,s_t(\mX)),    
	\end{equation}
	These polynomials were recently used to encode multi-sets in \cite{Maron2019provably}.

	\vspace{-0.3cm}
	
	\section{Equivariant multi-symmetric polynomial layers}

	In this section we develop the main theoretical tool of this paper, namely, a characterization of all permutation equivariant polynomial layers. As far as we know, these layers were not fully characterized before.

	
	Theorem \ref{thm:decomposition} provides an explicit representation of arbitrary permutation equivariant polynomial maps  $\mP:\Real^{n\times k}\too \Real^{n\times l}$ using the basis of power-sum multi-symmetric polynomials, $s_i(\mX)$. The particular use of power-sum polynomials $s_i(\mX)$ has the advantage it can be encoded efficiently using a neural network: as we will show $s_i(\mX)$ can be approximated using a PointNet with a single linear transmission layer. This allows approximating an arbitrary equivariant polynomial map using PointNet with a single linear transmission layer. 
	


	A version of this theorem for vectors instead of matrices (\ie, the case of $k=1$) appears as Proposition 2.27 in \cite{Golubitsky2002TheSP}; we extend their proof to matrices, which is the relevant scenario for ML applications as it allows working with sets of vectors.
	For $k=1$ Theorem \ref{thm:decomposition} reduces to the following form: $ \vp(x)_i= \sum_{a\le n}p_a(s_1(\vx), \ldots ,s_n(\vx)) x_{i}^a $ with $s_j(\vx) = \sum_{i}x_{i}^{j}$. For matrices the monomial $x_i^k$ is replaced by $\vx_i^{\alpha} $ for a multi-index $\alpha$ and the power-sum symmetric polynomials are replaced by the power-sum multi-symmetric polynomials. 
	
	First, note that it is enough to prove Theorem \ref{thm:main} for $l=1$ and apply it to every column of $\mP$. Hence, we deal with a vector of polynomials $\vp:\Real^{n\times k}\too \Real^n$ and need to prove it can be expressed as $\vp=\sum_{|\alpha|\leq n} \vb_\alpha q_\alpha$, for $S_n$ invariant polynomial $q_\alpha$. 
	
	Given a polynomial $p(\mX)$ and the cyclic permutation $\sigma^{-1}=(123\cdots n)$ the following operation, taking a polynomial to a vector of polynomials, is useful in characterizing equivariant polynomial maps: 
	\begin{equation}
	\ceil{p}(\mX) = \begin{pmatrix} p(\mX) \\ p(\sigma \cdot \mX)\\ p(\sigma^2\cdot \mX) \\ \vdots \\ p(\sigma^{n-1} \cdot \mX) \end{pmatrix}   
	\end{equation}
	
	Theorem \ref{thm:decomposition}  will be proved using the following two lemmas:
	\begin{lemma}\label{lem:brac_p}
		Let $\vp:\Real^{n\times k}\too \Real^{n}$ be an equivariant polynomial map. Then, there exists a polynomial $p:\Real^{n\times k}\too \Real$, invariant to $S_{n-1}$ (permuting the last $n-1$ rows of $\mX$) so that $\vp=\ceil{p}$.  
	\end{lemma}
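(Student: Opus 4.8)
The plan is to show that an equivariant $\vp$ is determined by its first coordinate $p := \vp_1$, and that this $p$ must be $S_{n-1}$-invariant, where $S_{n-1}$ acts on the last $n-1$ rows. First I would observe what equivariance says coordinate-by-coordinate: for every $\sigma \in S_n$ we have $\vp(\sigma\cdot\mX) = \sigma\cdot\vp(\mX)$, i.e. $\vp(\sigma\cdot\mX)_i = \vp(\mX)_{\sigma^{-1}(i)}$. Evaluating the $i$-th coordinate and choosing $\sigma$ appropriately, one can express $\vp(\mX)_i$ in terms of $p = \vp(\mX)_1$ evaluated at a permuted input. Concretely, take the cyclic permutation $\sigma^{-1} = (1\,2\,3\cdots n)$ as in the statement; then $\sigma^{j}$ shifts indices so that coordinate $1$ of $\vp(\sigma^{j}\cdot\mX)$ equals coordinate $\sigma^{-j}(1)$ of $\vp(\mX)$, and running $j = 0,1,\ldots,n-1$ sweeps out all $n$ coordinates of $\vp(\mX)$. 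This gives exactly $\vp = \ceil{p}$ with $p = \vp_1$, provided we read off the indices correctly; I would verify the bookkeeping of $\sigma$ versus $\sigma^{-1}$ carefully since that is the usual place to slip.

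The second step is to check that $p = \vp_1$ is invariant under $S_{n-1}$, the subgroup of $S_n$ fixing the index $1$ (equivalently, permuting rows $2,\ldots,n$). For $\tau \in S_{n-1}$ we have $\tau^{-1}(1) = 1$, so equivariance gives $\vp(\tau\cdot\mX)_1 = \vp(\mX)_{\tau^{-1}(1)} = \vp(\mX)_1$, i.e. $p(\tau\cdot\mX) = p(\mX)$. Thus $p$ is $S_{n-1}$-invariant. Conversely (though only the stated direction is needed), $\ceil{p}$ built from any such $p$ is automatically equivariant, which is a sanity check on the construction. Since $\vp$ is a polynomial map, its first coordinate $p$ is a polynomial, so all regularity requirements are met.

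I do not expect a serious obstacle here — the lemma is essentially a transitivity/orbit argument: $S_n$ acts transitively on indices $[n]$ with stabilizer $S_{n-1}$, so an equivariant vector-valued map is freely determined by one coordinate, subject to that coordinate being invariant under the stabilizer. The only thing requiring care is matching conventions: the action is $\sigma\cdot\mX = (\vx_{\sigma^{-1}(1)},\ldots,\vx_{\sigma^{-1}(n)})^T$, and $\ceil{p}$ is defined using powers of $\sigma$ where $\sigma^{-1} = (1\,2\cdots n)$, so I would track whether the $j$-th entry of $\ceil{p}(\mX)$ lands on coordinate $j+1$ or coordinate $n-j+1$ of $\vp(\mX)$, and, if needed, absorb the discrepancy by relabeling which coordinate of $\vp$ we call $p$ (any coordinate works by the same argument, since $S_n$ is transitive). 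Once the indexing is pinned down, the two displayed identities above complete the proof.
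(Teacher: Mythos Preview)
Your proposal is correct and follows essentially the same approach as the paper: set $p=\vp_1$, use equivariance with permutations in the stabilizer of $1$ to get $S_{n-1}$-invariance of $p$, and use equivariance with permutations moving $1$ to $i$ to recover the remaining coordinates. The only cosmetic difference is that the paper uses the transpositions $(1\,i)$ and then invokes the already-established $S_{n-1}$-invariance of $p_1$ to rewrite $p_1(\vx_i,\ldots,\vx_1,\ldots)$ as $p_1(\sigma^{i-1}\cdot\mX)$, whereas you apply powers of the cyclic $\sigma$ directly; your route is marginally more direct and your index check $(\sigma^j\cdot\vp(\mX))_1=\vp(\mX)_{\sigma^{-j}(1)}=\vp(\mX)_{j+1}$ goes through exactly as you anticipated.
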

	\begin{proof}
		Equivariance of $\vp$ means that for all $\sigma\in S_n$ it holds that $ \sigma\cdot \vp(\mX)=\vp(\sigma\cdot \mX)$
		\begin{equation}\label{e:lem_brac_p_equivariance}
		\sigma\cdot \vp(\mX)=\vp(\sigma\cdot \mX).   
		\end{equation}
		Choosing an arbitrary permutation $\sigma\in \mathrm{stab}(1) < S_n$, namely a permutation satisfying $\sigma(1)=1$, and observing the first row in \Eqref{e:lem_brac_p_equivariance} we get $p_1(\mX)=p_1(\sigma \cdot \mX)=p_1(\vx_1,\vx_{\sigma^{-1}(2)},\ldots, \vx_{\sigma^{-1}(n)})$. Since this is true for all $\sigma\in \mathrm{stab}(1)$ $p_1$ is $S_{n-1}$ invariant. Next, applying $\sigma=(1i)$ to \Eqref{e:lem_brac_p_equivariance} and observing the first row again we get $p_i(\mX) = p_1(\vx_i,\ldots,\vx_1,\ldots)$. Using the invariance of $p_1$ to $S_{n-1}$ we get $\vp=\ceil{p_1}$.  
	\end{proof}
	\begin{lemma}\label{lem:S_n-1}
		Let $p:\Real^{n\times k}\too \Real$ be a polynomial invariant to $S_{n-1}$ (permuting the last $n-1$ rows of $\mX$) then
		\begin{equation}\label{e:lem_S_n_expansion}
		p(\mX)=\sum_{\abs{\alpha}\leq n} \vx_1^\alpha q_\alpha(\mX),
		\end{equation} where $q_\alpha$ are $S_n$ invariant.     
	\end{lemma}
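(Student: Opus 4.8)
The plan is to decompose $p(\mX)$ as a polynomial in the entries of $\vx_1$ with coefficients that are polynomials in the remaining rows $\vx_2,\ldots,\vx_n$, and then symmetrize each coefficient. Concretely, write $p(\mX) = \sum_{\alpha\in\Nat^k} \vx_1^\alpha\, r_\alpha(\vx_2,\ldots,\vx_n)$ for uniquely determined polynomials $r_\alpha$, where the sum is finite. Since $p$ is invariant under $S_{n-1}$ acting on $\vx_2,\ldots,\vx_n$, and this action does not touch $\vx_1$, each coefficient polynomial $r_\alpha(\vx_2,\ldots,\vx_n)$ is itself $S_{n-1}$ invariant (uniqueness of the expansion in powers of $\vx_1$ forces $r_\alpha(\sigma\cdot(\vx_2,\ldots,\vx_n)) = r_\alpha(\vx_2,\ldots,\vx_n)$).

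Next I would address the degree bound $\abs{\alpha}\le n$. A priori the expansion only gives $\abs{\alpha}\le \deg p$, which may be larger than $n$. The key algebraic fact to invoke is that the ring of $S_{n-1}$-invariant polynomials in $\vx_2,\ldots,\vx_n$ (for fixed $\vx_1$), or more to the point the ring of $S_n$-invariants, is generated by the power-sum multi-symmetric polynomials $s_1(\mX),\ldots,s_t(\mX)$ with $\abs{\alpha}\le n$; equivalently, any monomial $\vx_1^\alpha$ with $\abs{\alpha} > n$ can be rewritten, modulo the ideal relations, using the Newton-type identities that express $\sum_i \vx_i^\beta$ for $\abs{\beta}>n$ as a polynomial in the $s_\gamma$ with $\abs{\gamma}\le n$ together with lower-degree terms in the individual variables. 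Using these identities one reduces any high power of $\vx_1$ appearing in the expansion, at the cost of introducing $S_n$-invariant factors and terms $\vx_1^{\alpha'}$ with smaller $\abs{\alpha'}$; iterating terminates and yields an expansion with only $\abs{\alpha}\le n$, absorbing everything extra into the invariant coefficients $q_\alpha$.

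Finally I would upgrade the coefficients from merely $S_{n-1}$-invariant to $S_n$-invariant. Here the mechanism is symmetrization: having written $p(\mX)=\sum_{\abs{\alpha}\le n}\vx_1^\alpha q_\alpha'(\mX)$ with $q_\alpha'$ depending only on $\vx_2,\ldots,\vx_n$ (hence $S_{n-1}$-invariant), we cannot simply average over $S_n$ since the factors $\vx_1^\alpha$ are not invariant. Instead I would use that an $S_{n-1}$-invariant polynomial in $\vx_2,\ldots,\vx_n$, viewed as a polynomial in all $n$ rows, can be expressed as a polynomial in the $s_\gamma(\mX)$ and the single extra variable-block $\vx_1$ (this is the statement that $\Real[\mX]^{S_{n-1}}$ is generated over $\Real[\mX]^{S_n}$ by the coordinates of $\vx_1$, see e.g.\ \cite{rydh2007minimal}); substituting such expressions and re-collecting powers of $\vx_1$, again reducing $\abs{\alpha}$ past $n$ via the Newton identities as above, produces coefficients that are genuine polynomials in $s_1(\mX),\ldots,s_t(\mX)$, hence $S_n$-invariant.

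I expect the main obstacle to be the bookkeeping in the degree-reduction step: showing cleanly that the process of eliminating powers $\vx_1^\alpha$ with $\abs{\alpha}>n$ terminates and does not reintroduce high powers. This is exactly the point where the vector-valued ($k>1$) case genuinely differs from the scalar case of Proposition 2.27 in \cite{Golubitsky2002TheSP}, since one must track multi-indices rather than a single exponent; a clean way to organize it is by induction on $\abs{\alpha}$ (or on a monomial order on $\Nat^k$), handling the top monomial using the relation $\vx_1^\alpha = s_\alpha(\mX) - \sum_{i=2}^n \vx_i^\alpha$ and then recursing on the strictly lower-degree remainder.
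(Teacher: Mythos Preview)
Your approach is essentially the paper's: expand $p$ in powers of $\vx_1$, observe that the coefficient polynomials in $\vx_2,\ldots,\vx_n$ are $S_{n-1}$-invariant, write each of them in terms of the power sums $s_\gamma(\vx_2,\ldots,\vx_n)$, and then substitute $s_\gamma(\vx_2,\ldots,\vx_n)=s_\gamma(\mX)-\vx_1^{\alpha_\gamma}$ to produce $S_n$-invariant coefficients after re-collecting powers of $\vx_1$. The one place the paper is more economical is exactly the step you flag as the main obstacle: rather than running a Newton-identity induction to force $\abs{\alpha}\le n$, the paper first obtains the expansion $p(\mX)=\sum_{\abs{\alpha}\le n'}\vx_1^\alpha\,\tilde p_\alpha(\mX)$ with $S_n$-invariant $\tilde p_\alpha$ for some possibly large $n'$, and then simply invokes Corollary~5 of \cite{Briand04} to conclude that $n'$ can be taken to be $n$. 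Your proposed hand-reduction would work but is precisely the content of that cited result, so there is no need to redo it.
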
	
	\begin{proof}
		Expanding $p$ with respect to $\vx_1$ we get
		\begin{equation}\label{e:lem_S_n_expansion_intermediate}
		p(\mX)=\sum_{\abs{\alpha}\leq m} \vx_1^\alpha p_\alpha(\vx_2,\ldots,\vx_n),    
		\end{equation}
		for some $m\in \Nat$. We first claim $p_\alpha$ are $S_{n-1}$ invariant. Indeed, note that if $p(\mX)=p(\vx_1,\vx_2,\ldots,\vx_n)$ is $S_{n-1}$ invariant, \ie, invariant to permutations of $\vx_2,\ldots,\vx_n$, then also its derivatives $\frac{\partial^{|\beta|}}{\partial \vx_1^\beta}p(\mX)$ are $S_{n-1}$ permutation invariant, for all $\beta\in\Nat^{k}$. Taking the derivative $\partial^{\abs{\beta}}/\partial \vx_1^\beta$ on both sides of \Eqref{e:lem_S_n_expansion_intermediate} we get that $p_\beta$ is $S_{n-1}$ equivariant. 
		
		For brevity denote $p=p_\alpha$. Since $p$ is $S_{n-1}$ invariant it can be expressed as a polynomial in the power-sum multi-symmetric polynomials, \ie, $p(\vx_2,\ldots,\vx_n)=r(s_1(\vx_2,\ldots,\vx_n),\ldots,s_t(\vx_2,\ldots,\vx_n))$. Note that $s_i(\vx_2,\ldots,\vx_n)=s_i(\mX)-\vx_1^{\alpha_i}$ and therefore
		$$ p(\vx_2,\ldots,\vx_n)=r(s_1(\mX)-\vx_1^{\alpha_1},\ldots,s_t(\mX)-\vx_1^{\alpha_t}).$$
		Since $r$ is a polynomial, expanding its monomials in $s_i(\mX)$ and $\vx_1^\alpha$ shows $p$ can be expressed as $p=\sum_{\abs{\alpha}\leq m'}\vx_1^{\alpha}\tilde{p}_\alpha$, where $m'\in\Nat$, and $\tilde{p}_\alpha$ are $S_n$ invariant (as multiplication of invariant $S_n$ polynomials $s_i(\mX)$). 
		%
		%
		%
		%
		Plugging this in \Eqref{e:lem_S_n_expansion_intermediate} we get \Eqref{e:lem_S_n_expansion}, possibly with the sum over some $\emph{n'}>n$. It remains to show $n'$ can be taken to be at-most $n$. This is proved in Corollary 5 in \cite{Briand04} 
	\end{proof}
	
	\begin{proof}(\emph{Theorem \ref{thm:decomposition}})
		Given an equivariant $\vp$ as above, use Lemma \ref{lem:brac_p} to write $\vp=\ceil{p}$ where $p(\mX)$ is invariant to permuting the last $n-1$ rows of $\mX$. Use Lemma \ref{lem:S_n-1} to write $p(\mX)=\sum_{\abs{\alpha}\leq n} \vx_1^\alpha q_\alpha(\mX)$, where $q_\alpha$ are $S_n$ invariant. We get,
		\begin{align*}
		\vp=\ceil{p}=\sum_{\abs{\alpha}\leq n}\vb_\alpha q_\alpha.
		\end{align*}
		
		
		The converse direction is immediate after noting that $\vb_\alpha$ are equivariant and $q_\alpha$ are invariant. 
	\end{proof}

	\section{Universality of set equivariant neural networks}
	We consider equivariant deep neural networks $f:\Real^{n\times k_{in}}\too \Real^{n\times k_{out}}$, 
	\begin{equation}\label{e:f}
	\mF(\mX) = \mL_m \circ \nu \circ \cdots \circ \nu \circ \mL_1(\mX),
	\end{equation}
	where $\mL_i:\Real^{n\times k_i}\too \Real^{n\times k_{i+1}}$ are affine  equivariant transformations, and $\nu$ is an entry-wise non-linearity (\eg, ReLU). We define the width of the network to be $\omega=\max_i k_i$; note that this definition is different from the one used for standard MLP where the width would be $n\omega$, see \eg, \cite{hanin17}. 
	\cite{zaheer2017deep} proved that affine equivariant $\mL_i$ are of the form
	\begin{equation}\label{e:deepsets_layers}
	\mL_i(\mX) = \mX \mA + \frac{1}{n}\one \one^T \mX \mB + \one \vc^T,    
	\end{equation}
	where $\mA,\mB\in \Real^{k_i \times k_{i+1}}$, and $\vc\in\Real^{k_{i+1}}$ are the layer's trainable parameters; we call the linear transformation $\mX\mapsto \frac{1}{n}\one\one^T \mX \mB$ a linear transmission layer.

	We now define the equivariant models considered in this paper:
	%
	The DeepSets \citep{zaheer2017deep} architecture is \Eqref{e:f} with the choice of layers as in \Eqref{e:deepsets_layers}. Taking $\mB=0$ in all layers is the PointNet architecture \citep{qi2017pointnet}. PointNetST is an equivariant model of the form \Eqref{e:f} with layers as in \Eqref{e:deepsets_layers} where only a single layer $\mL_i$ has a non-zero $\mB$.
	The PointNetSeg \citep{qi2017pointnet} architecture is PointNet composed with an invariant max layer, namely $\boldsymbol\max(\mF(\mX))_j = \max_{i\in [n]} \mF(\mX)_{i,j}$ and  then concatenating it with the input $\mX$, \ie, $[\mX, \one \boldsymbol\max(\mF(\mX)) ]$,  and feeding is as input to another PointNet $\mG$, that is $\mG([\mX, \one \boldsymbol\max(\mF(\mX)) ])$. 
	
	%

	
	
	We will prove PointNetST is permutation equivariant universal and therefore arguably the simplest permutation equivariant universal model known to date.

	
	Universality of equivariant deep networks is defined next.
	
	\begin{definition}
		Permutation equivariant universality\footnote{Or just \emph{equivariant universal} in short.} of a model $\mF:\Real^{n\times k_{in}}\too\Real^{n\times k_{out}}$ means that for every permutation equivariant continuous function $\mH:\Real^{n\times k_{in}}\too \Real^{n\times k_{out}}$ defined over the cube $K =[0,1]^{n\times k_{in}} \subset \Real^{n\times k_{in}}$, and $\epsilon>0$ there exists a choice of $m$ (\ie, network depth), $k_i$ (\ie, network width) and the trainable parameters of $\mF$ so that $\norm{\mH(\mX)-\mF(\mX)}_\infty < \epsilon$ for all $\mX\in K$. 
	\end{definition}

	
	
	
	\emph{Proof. (Theorem \ref{thm:main})}
	Fact \emph{(i)}, namely that PointNet is not equivariant universal is a consequence of the following simple lemma:
	%
	
	\begin{lemma}\label{lem:pointnet_not_univ}
		Let $\vh=(h_1,\ldots,h_n)^T:\Real^n \too \Real^n$ be the equivariant linear function defined by $h(\vx)=\one \one^T \vx$. There is no $f:\Real\too\Real$ so that $\abs{h_i(\vx)-f(x_i)}< \frac{1}{2}$ for all $i\in [n]$ and $\vx\in [0,1]^n$. 
	\end{lemma}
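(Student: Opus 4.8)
The plan is to derive a contradiction from the existence of such an $f$ by evaluating the purported approximation at two cleverly chosen inputs that force $f$ to take incompatible values at the same scalar argument. The key observation is that $h_i(\vx) = \one^T\vx = \sum_{j=1}^n x_j$ for every $i$, so $h_i$ depends on the \emph{whole} sum and is identical across coordinates, whereas $f(x_i)$ sees only the $i$-th entry; intuitively, PointNet cannot transmit information between set elements, so it cannot reproduce $h$.

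\textbf{Step 1.} Consider the all-zeros input $\vx = \zero \in [0,1]^n$. Then $h_i(\zero) = 0$ for all $i$, so the hypothesis gives $\abs{f(0)} < \tfrac12$.

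\textbf{Step 2.} Consider the all-ones input $\vx = \one \in [0,1]^n$. Then $h_i(\one) = \sum_{j=1}^n 1 = n$ for all $i$, so the hypothesis gives $\abs{n - f(1)} < \tfrac12$, i.e. $f(1) > n - \tfrac12$.

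\textbf{Step 3.} Now I need a single input that pins down $f$ at an argument that appears in both of the previous configurations, forcing a clash. Take $\vx = \ve_1$ (the first standard basis vector) for $n \ge 2$. Then $h_i(\ve_1) = 1$ for all $i$. Looking at coordinate $1$ gives $\abs{1 - f(1)} < \tfrac12$, hence $f(1) < \tfrac32$; looking at coordinate $2$ gives $\abs{1 - f(0)} < \tfrac12$, hence $f(0) > \tfrac12$. The second of these already contradicts Step 1 ($\abs{f(0)} < \tfrac12$ forces $f(0) < \tfrac12$), so we are done for $n \ge 2$; alternatively $f(1) < \tfrac32$ contradicts Step 2 once $n \ge 2$. (The statement is vacuous or trivial for $n = 1$, where $h(\vx) = x_1$ and one can take $f = \mathrm{id}$; presumably $n\ge 2$ is assumed throughout, but I would note this.)

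There is essentially no obstacle here — the only care needed is in \emph{choosing} the inputs so that the scalar arguments overlap: a vector with some $0$ entries and some $1$ entries but a nonzero sum is exactly what decouples ``what $f$ sees'' (a $0$ or a $1$) from ``what $h$ sees'' (the total sum), and this is the whole content of the argument. One could also phrase it more slickly: $f(0)$ would have to equal both $h_i(\zero) = 0$ and $h_i(\ve_1) = 1$ up to error $\tfrac12$, which is impossible since $\abs{0 - 1} = 1$. I would present it in that compressed form.
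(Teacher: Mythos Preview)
Your proof is correct and essentially the same as the paper's: the paper also compares $\vx=\zero$ and $\vx=\ve_1$ at a coordinate with entry $0$, using the triangle inequality $1=\abs{h_2(\ve_1)-h_2(\zero)}\le \abs{h_2(\ve_1)-f(0)}+\abs{f(0)-h_2(\zero)}<1$, which is exactly your ``compressed form.'' Your Step~2 with $\vx=\one$ is superfluous (and your remark about $n\ge 2$ is apt).
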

	\begin{proof}
		Assume such $f$ exists. Let $\ve_1=(1,0,\ldots,0)^T\in\Real^n$. Then, 
		\begin{align*}
		1=\abs{h_2(\ve_1)-h_2(\zero)} \leq \abs{h_2(\ve_1)-f(0)} + \abs{f(0) - h_2(\zero)} < 1
		\end{align*}
		reaching a contradiction. 
	\end{proof}
	
	To prove \emph{(ii)} we first reduce the problem from the class of all continuous equivariant functions to the class of equivariant polynomials. This is justified by the following lemma.
	\begin{lemma}\label{lem:equi_poly_dense_in_C}
		Equivariant polynomials $\mP:\Real^{n\times k_{in}}\too \Real^{n\times k_{out}}$ are dense in the space of continuous equivariant functions $\mF:\Real^{n\times k_{in}}\too \Real^{n\times k_{out}}$ over the cube $K$.
	\end{lemma}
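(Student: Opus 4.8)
The plan is to combine the classical Stone--Weierstrass theorem with a group-averaging (Reynolds operator) argument. First I would invoke Stone--Weierstrass coordinate-wise: since $K=[0,1]^{n\times k_{in}}$ is compact and $\mF$ is continuous, each of the $n\,k_{out}$ scalar entries of $\mF$ can be uniformly approximated by a polynomial, so for any $\epsilon>0$ there is a polynomial map $\mQ:\Real^{n\times k_{in}}\too\Real^{n\times k_{out}}$ — with no symmetry assumed — such that $\norm{\mF(\mX)-\mQ(\mX)}_\infty<\epsilon$ for all $\mX\in K$.

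Next I would symmetrize $\mQ$ to enforce equivariance while retaining both polynomiality and the approximation quality. Define
\begin{equation}
\mP(\mX)=\frac{1}{n!}\sum_{\sigma\in S_n}\sigma^{-1}\cdot\mQ(\sigma\cdot\mX).
\end{equation}
Because the action $\mX\mapsto\sigma\cdot\mX$ is just a fixed permutation of the rows of $\mX$, hence a linear map, each summand $\sigma^{-1}\cdot\mQ(\sigma\cdot\cdot)$ is a polynomial map, and therefore so is $\mP$. A re-indexing $\tau\mapsto\tau\sigma$ in the sum, using that the row permutation is a left action ($(\tau\sigma)\cdot\mX=\tau\cdot(\sigma\cdot\mX)$), shows $\mP(\sigma\cdot\mX)=\sigma\cdot\mP(\mX)$ for all $\sigma\in S_n$, i.e. $\mP$ is permutation equivariant.

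It then remains to estimate $\norm{\mF(\mX)-\mP(\mX)}_\infty$ on $K$. I would use three facts: (a) $K$ is invariant under the $S_n$-action, since permuting rows of a matrix with entries in $[0,1]$ stays in $K$; (b) the $\infty$-norm is invariant under this action, $\norm{\sigma\cdot\mY}_\infty=\norm{\mY}_\infty$; and (c) $\mF$ is itself equivariant, so $\mF(\mX)=\frac{1}{n!}\sum_{\sigma}\sigma^{-1}\cdot\mF(\sigma\cdot\mX)$. Combining these,
\begin{align*}
\norm{\mF(\mX)-\mP(\mX)}_\infty
&=\norm{\frac{1}{n!}\sum_{\sigma\in S_n}\sigma^{-1}\cdot\big(\mF(\sigma\cdot\mX)-\mQ(\sigma\cdot\mX)\big)}_\infty\\
&\leq\frac{1}{n!}\sum_{\sigma\in S_n}\norm{\mF(\sigma\cdot\mX)-\mQ(\sigma\cdot\mX)}_\infty<\epsilon,
\end{align*}
where the last inequality uses $\sigma\cdot\mX\in K$ for every $\sigma$.

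There is no genuine obstacle here; the only point requiring care is that the averaging operator must simultaneously (i) output a bona fide polynomial map, which works precisely because the group acts linearly on $\Real^{n\times k}$, and (ii) not degrade the uniform estimate, which relies on the $S_n$-invariance of both the domain $K$ and the norm $\norm{\cdot}_\infty$ (with a non-invariant norm one would only lose a harmless multiplicative constant).
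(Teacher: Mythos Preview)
Your proposal is correct and follows essentially the same approach as the paper: approximate $\mF$ entrywise by a polynomial map via Stone--Weierstrass, then average over $S_n$ using the Reynolds operator and exploit the equivariance of $\mF$ together with the $S_n$-invariance of $K$ and of $\norm{\cdot}_\infty$ to control the error. The only cosmetic difference is the convention $\sigma^{-1}\cdot\mQ(\sigma\cdot\mX)$ versus the paper's $\sigma\cdot\mP(\sigma^{-1}\cdot\mX)$, and you spell out a few points (linearity of the action preserving polynomiality, invariance of $K$) that the paper leaves implicit.
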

	
	\begin{proof}
		Take an arbitrary $\epsilon>0$. Consider the function $f_{ij}:\Real^{n\times k_{in}}\too \Real$, which denotes the $(i,j)$-th output entry of $\mF$. By the Stone-Weierstrass Theorem there exists a polynomial $p_{ij}:\Real^{n\times k_{in}}\too \Real$ such that $\norm{f_{ij}(\mX)-p_{ij}(\mX)}_\infty \leq \epsilon$ for all $\mX\in K$. Consider the polynomial map $\mP:\Real^{n\times k_{in}}\too \Real^{n\times k_{out}}$ defined by $(\mP)_{ij}=p_{ij}$. $\mP$ is in general not equivariant. To finish the proof we will symmetrize $\mP$: 
		\begin{align*}
		\norm{\mF(\mX)-\frac{1}{n!}\sum_{\sigma\in S_n}\sigma \cdot \mP(\sigma^{-1} \cdot \mX)}_\infty &= 
		\norm{\frac{1}{n!}\sum_{\sigma\in S_n}\sigma \cdot \mF(\sigma^{-1} \cdot \mX)-\frac{1}{n!}\sum_{\sigma\in S_n}\sigma \cdot \mP(\sigma^{-1} \cdot \mX)}_\infty \\
		&= \norm{\frac{1}{n!}\sum_{\sigma\in S_n}\sigma \cdot \parr{ \mF(\sigma^{-1} \cdot \mX)-\mP(\sigma^{-1} \cdot \mX)}}_\infty\\
		&\leq \frac{1}{n!} \sum_{\sigma\in S_n}\epsilon = \epsilon,
		\end{align*}
		where in the first equality we used the fact that $\mF$ is equivariant. This concludes the proof since $\sum_{\sigma\in S_n}\sigma\cdot \mP(\sigma^{-1}\cdot \mX)$ is an equivariant polynomial map. 
	\end{proof}	
	
	Now, according to Theorem \ref{thm:decomposition} an arbitrary equivariant polynomial $\mP:\Real^{n\times k_{in}}\too \Real^{n\times k_{out}}$ can be written as $\mP=\sum_{\abs{\alpha}\leq n} \vb_\alpha(\mX) \vq_\alpha(\mX)^T$, where $\vb_\alpha(\mX) = \ceil{\vx_1^\alpha} \in\Real^n$ and $\vq_\alpha=(q_{\alpha,1},\ldots,q_{\alpha,k_{out}})\in\Real^{k_{out}}$ are  invariant polynomials. Remember that every $S_n$ invariant polynomial can be expressed as a polynomial in the $t={n+k_{in} \choose k_{in}}$ power-sum multi-symmetric polynomials $s_j(\mX)=\frac{1}{n}\sum_{i=1}^n \vx_i^{\alpha_j}$, $j\in [t]$ (we use the $1/n$ normalized version for a bit more simplicity later on). We can therefore write $\mP$ as composition of three maps:
	\begin{equation}\label{e:mP_decomposition}
	\mP = \mQ \circ \mL \circ \mB,   
	\end{equation}
	where $\mB:\Real^{n\times k_{in}}\too \Real^{n\times t}$ is defined by $$\mB(\mX)=(\vb(\vx_1),\ldots,\vb(\vx_n))^T,$$ $\vb(\vx)=(\vx^{\alpha_1},\ldots,\vx^{\alpha_t})$; $\mL$ is defined as in \Eqref{e:deepsets_layers} with $\mB = [\zero,\mI]$ and $\mA=[\ve_1, \ldots, \ve_{k_{in}},\zero]$, where $\mI\in\Real^{t\times t}$ the identity matrix and $\ve_i\in \Real^{t}$ represents the standard basis (as-usual). We assume $\alpha_j=\ve_j\in\Real^{k_{in}}$, for $j\in [k_{in}]$. Note that the output of $\mL$ is of the form 
	$$\mL(\mB(\mX)) = (\mX, \one s_1(\mX), \one s_2(\mX), \ldots, \one s_t(\mX)).$$ 
	Finally, $\mQ:\Real^{n\times (k_{in}+t)}\too \Real^{n\times k_{out}}$ is defined by $$\mQ(\mX,\one s_1,\ldots,\one s_t)=(\vq(\vx_1,s_1,\ldots,s_t),\ldots,\vq(\vx_n,s_1,\ldots,s_t))^T,$$ and  $\vq(\vx,s_1,\ldots,s_t)= \sum_{\abs{\alpha}\leq n} \vx^\alpha \vq_\alpha(s_1,\ldots,s_t)^T$.

	\begin{wrapfigure}[12]{r}{0.45\textwidth}
		\vspace{-20pt}
		\begin{center}
			\includegraphics[width=0.44\textwidth]{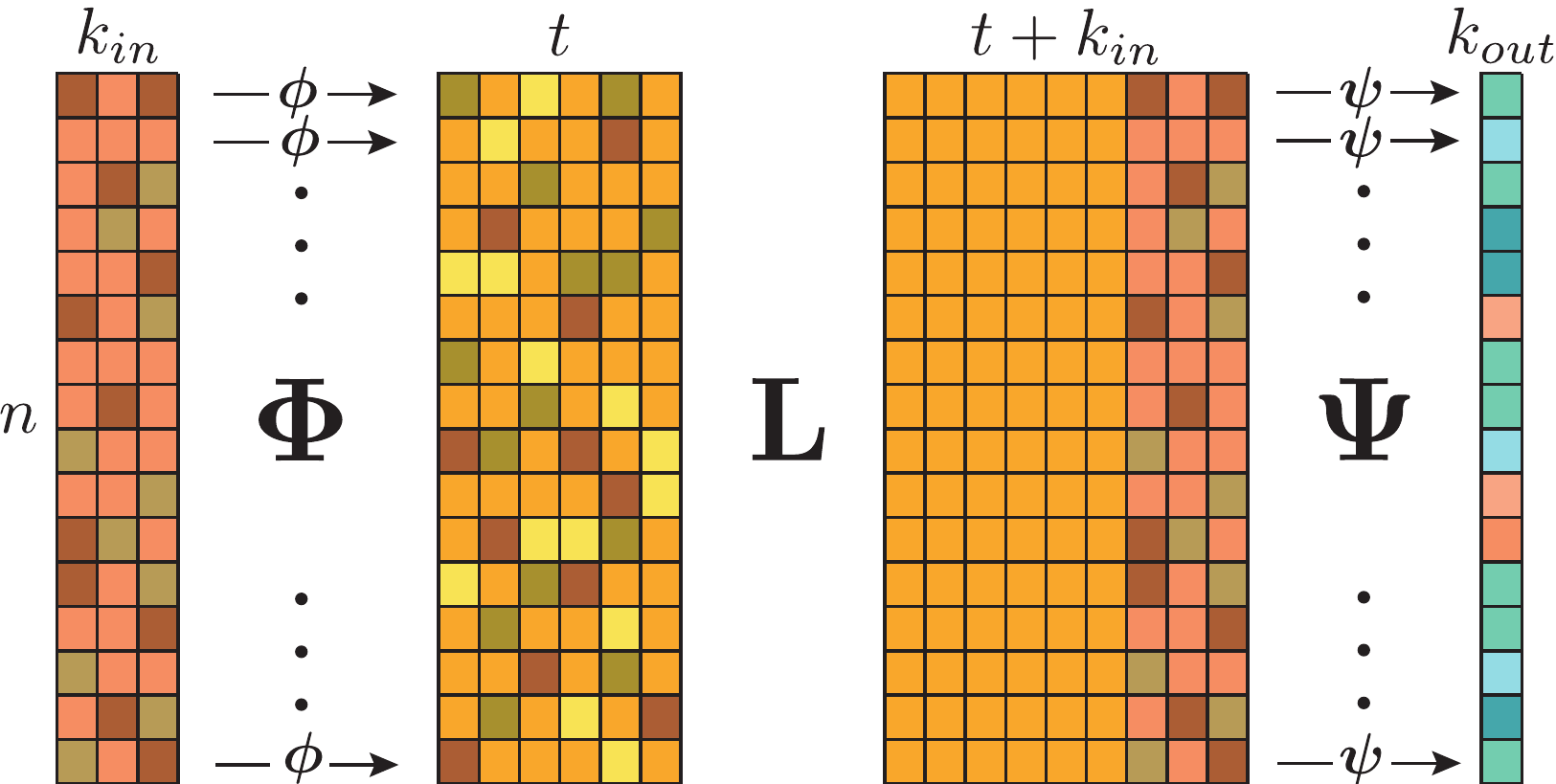}
			\vspace{-10pt}
		\end{center}
		\caption{The construction of the universal network (PointNetST).}
		\label{fig:pointnetst}
	\end{wrapfigure}
	The decomposition in \Eqref{e:mP_decomposition} of $\mP$ suggests that replacing $\mQ,\mB$ with Multi-Layer Perceptrons (MLPs) would lead to a universal permutation equivariant network consisting of PointNet with a single linear transmission layer, namely PointNetST. 
	

	The $\mF$ approximating $\mP$ will be defined as \begin{equation}
	\mF = \mPsi \circ \mL \circ \mPhi,
	\end{equation}
	where $\mPhi:\Real^{n\times k_{in}}\too \Real^{n\times t}$ and $\mPsi:\Real^{n\times (t + k_{in})}\too \Real^{n\times k_{out}}$ are both of PointNet architecture, namely there exist MLPs $\mphi:\Real^{k_{in}}\too \Real^{t}$ and $\mpsi:\Real^{t + k_{in}}\too \Real^{k_{out}}$ so that $\mPhi(\mX)=(\boldsymbol\phi(\vx_1),\ldots, \boldsymbol\phi(\vx_n))^T$ and $\mPsi(\mX)=( \boldsymbol\psi(\vx_1),\ldots, \boldsymbol\psi(\vx_n))^T$. See Figure \ref{fig:pointnetst} for an illustration of $\mF$.
	
	To build the MLPs $\mphi$ ,$\mpsi$ we will first construct $\mpsi$ to approximate $\vq$, that is, we use the universality of MLPS (see \citep{hornik1991approximation,sonoda2017neural,hanin17}) to construct $\mpsi$ so that  $\norm{\boldsymbol\psi(\vx,s_1,\ldots,s_t)-\vq(\vx,s_1,\ldots,s_t)}_\infty < \frac{\epsilon}{2}$ for all $(\vx,s_1,\ldots,s_t)\in [0,1]^{k_{in}+t}$. Furthermore, as $\mpsi$ over $[0,1]^{k_{in}+t}$ is uniformly continuous, let $\delta$ be such that if $\vz,\vz'\in [0,1]^{k_{in}+t}$, $\norm{\vz-\vz'}_\infty < \delta$ then $\norm{\boldsymbol\psi(\vz)-\boldsymbol\psi(\vz')}_\infty<\frac{\epsilon}{2}$. Now, we use universality again to construct $\mphi$ approximating $\vb$, that is we take $\mphi$ so that $\norm{\boldsymbol\phi(\vx)-\vb(\vx)}_\infty < \delta$ for all $\vx\in[0,1]^{k_{in}}$. 
	\begin{align*}
	\norm{\mF(\mX)-\mP(\mX)}_\infty &\leq \norm{\mPsi(\mL(\mPhi(\mX))) - \mPsi(\mL(\mB(\mX)))}_\infty + \norm{\mPsi(\mL(\mB(\mX)))-\mQ(\mL(\mB(\mX)))}_\infty \\ &= \mathrm{err}_1 + \mathrm{err}_2
	\end{align*}
	First, $\norm{\mL(\mPhi(\mX)) - \mL(\mB(\mX))}_\infty < \delta $ for all $\mX\in K$ and therefore $\mathrm{err}_1<\frac{\epsilon}{2}$. Second, note that if $\mX\in K$ then $\mB(\mX)\in[0,1]^{n\times t}$ and $\mL(\mB(\mX))\in [0,1]^{n\times (k_{in}+t)}$. Therefore by construction of $\mpsi$ we have $\mathrm{err}_2<\frac{\epsilon}{2}$. 
	
	To prove \emph{(iii)} we use the result in \cite{hanin17} (see Theorem 1) bounding the width of an MLP approximating a function $\vf:[0,1]^{d_{in}}\too\Real^{d_{out}}$ by $d_{in}+d_{out}$. Therefore, the width of the MLP $\mphi$ is bounded by $k_{in}+t$, where the width of the MLP $\psi$ is bounded by $t+k_{in}+k_{out}$, proving the bound. 
	\qquad\qquad\qquad\qquad\qquad\qquad\qquad\qquad\qquad\qquad\qquad\ \ \qquad\qquad\qquad\qquad\qquad\qquad\qedsymbol

	
	We can now prove Cororllary \ref{cor:PoinNetSeg}.

	\begin{proof}(\emph{Corollary \ref{cor:PoinNetSeg}})    
		
		The fact that the DeepSets model is equivariant universal is immediate. Indeed, The PointNetST model can be obtained from the DeepSets model by setting $\mB =0$ in all but one layer, with $\mB$ as in \Eqref{e:deepsets_layers}.
		
		For the PointNetSeg model note that by Theorem 1 in \cite{qi2017pointnet} every invariant function $f:\Real^{n\times k_{in}} \to \Real^{t}$ can be approximated by a network of the form $\mPsi(\boldsymbol\max(\mF(\mX)) )$, where $(\mF(\mX)$ is a PointNet model and $\mPsi$ is an MLP. In particular, for every $\varepsilon>0$ there exists such $\mF, \mPhi$ for which $ \|\mPsi(\boldsymbol\max(\mF(\mX)) )- (s_1(\mX), \ldots, s_t(\mX))\|_\infty<\varepsilon$ for every $ \mX \in [0,1]^{n\times k_{in}}$ where $s_1(\mX), \ldots, s_t(\mX)$ are the power-sum multi-symmetric polynomials. It follows that  we can use PointNetSeg to approximate $\1 (s_1(\mX), \ldots, s_t(\mX)) $. The rest of the proof closely resembles the proof of Theorem \ref{thm:main}.
		

	\end{proof}

	
	\paragraph{Graph neural networks with constructed adjacency.}

	One approach sometimes applied to learning from sets of vectors is to define an adjacency matrix (\eg, by thresholding distances of node feature vectors) and apply a graph neural network to the resulting graph (\eg, \cite{dgcnn}, \cite{li2019deepgcns}). Using the common message passing paradigm \citep{Gilmer2017} in this case boils to layers of the form: $\mL(\mX)_i = \psi(\vx_i,\sum_{j\in N_i} \phi(\vx_i,\vx_j))= \psi(\vx_i,\sum_{j\in[n]} N(\vx_i,\vx_j)\phi(\vx_i,\vx_j))$, where $\phi,\psi$ are MLPs, $N_i$ is the index set of neighbors of node $i$, and $N(\vx_i,\vx_j)$ is the indicator function for the edge $(i,j)$.  If $N$ can be approximated by a continuous function, which is the case at least in the $L_2$ sense for a finite set of vectors, then since $\mL$ is also equivariant it follows from Theorem \ref{thm:main} that such a network can be approximated (again, at least in $L_2$ norm) arbitrarily well by any universal equivariant network such as PointNetST or DeepSets. 
	
	We tested the ability of a DeepSets model with varying depth and width to approximate a single graph convolution layer. We found that a DeepSets model with a small number of layers can approximate a graph convolution layer reasonably well. For details see Appendix \ref{app:aproxgcn}.
	\vspace{-0.2cm}
	
	\section{Experiments}
	\vspace{-0.3cm}
	
	We conducted experiments in order to validate our theoretical observations. We compared the results of several equivariant models, as well as baseline (full) MLP, on three equivariant learning tasks: a classification task (knapsack) and two regression tasks (squared norm and Fiedler eigen vector). For all tasks we compare results of $7$ different models: DeepSets, PointNet, PointNetSeg, PointNetST, PointNetQT and GraphNet. PointNetQT is PointNet with a single quadratic equivariant transmission layer as defined in Appendix \ref{append:nets}. GraphNet is similar to the graph convolution network in \cite{kipf2016semi} and is defined explicitly in Appendix \ref{append:nets}. We generated the adjacency matrices for GraphNet by taking $10$ nearest neighbors of each set element. In all experiments we used a network of the form \Eqref{e:f} with $m=6$ depth and varying width, fixed across all layers.  \footnote{The code can be found at https://github.com/NimrodSegol/On-Universal-Equivariant-Set-Networks/settings}
	\vspace{-0.2cm}
	\paragraph{Equivariant classification.} For classification, we chose to learn the multidimensional knapsack problem, which is known to be NP-hard. We are given a set of $4$-vectors, represented by  $\mX\in\Real^{n\times 4}$, and our goal is to learn the equivariant classification function $f:\Real^{n\times 4}\too \set{0,1}^n$ defined by the following optimization problem:
	\begin{align*}
	f(\mX)=\argmax_{z} & \quad \sum_{i=1}^n  x_{i1} z_i \\
	\text{s.t.} & \quad  \sum_{i=1}^n x_{ij}z_i \leq w_j, \qquad j = 2,3,4 \\
	& \quad z_i\in \set{0,1}, \qquad \qquad i\in[n]
	\end{align*}
	Intuitively, given a set of vectors $\mX\in\Real^{n\times 4}$, $(\mX)_{ij}=x_{ij}$, where each row represents an element in a set, our goal is to find a subset maximizing the value while satisfying budget constraints. The first column of $\mX$ defines the value of each element, and the three other columns the costs. 

	To evaluate the success of a trained model we record the percentage of sets for which the predicted subset is such that all the budget constrains are satisfied and the value is within $10\%$ of the optimal value.
	In Appendix \ref{appenb} we detail how we generated this dataset. 
	\vspace{-0.2cm}
	\paragraph{Equivariant regression.}
	The first equivariant function we considered for regression is the function $f(\mX) = \1 \sum_{i=1}^{n} \sum_{j=1}^{k} (\mX_{i,j} -\frac{1}{2})^2$. \cite{hanin17} showed this function cannot be approximated by MLPs of small width. We drew $10k$ training examples and $1k$ test examples i.i.d.~from a $\gN(\frac{1}{2}, 1)$ distribution (per entry of $\mX$). 
	
	The second equivariant function we considered is defined on point clouds $\mX\in\Real^{n\times 3}$. For each point cloud we computed a graph by connecting every point to its $10$ nearest neighbors. We then computed the absolute value of the first non trivial eigenvector of the graph Laplacian. We used the ModelNet dataset \citep{Wu_2015_CVPR} which contains $\sim9k$ training meshes and $\sim2k$ test meshes. The point clouds are generated by randomly sampling $512$ points from each mesh.

	\begin{figure}
		\centering
		\begin{tabular}{ccc} 
			\scriptsize Knapsack test &\scriptsize  Fiedler test& \scriptsize $\sum_{x\in \mX}(x-\frac{1}{2})^2$ test\\
			\includegraphics[width=0.31\columnwidth,keepaspectratio]{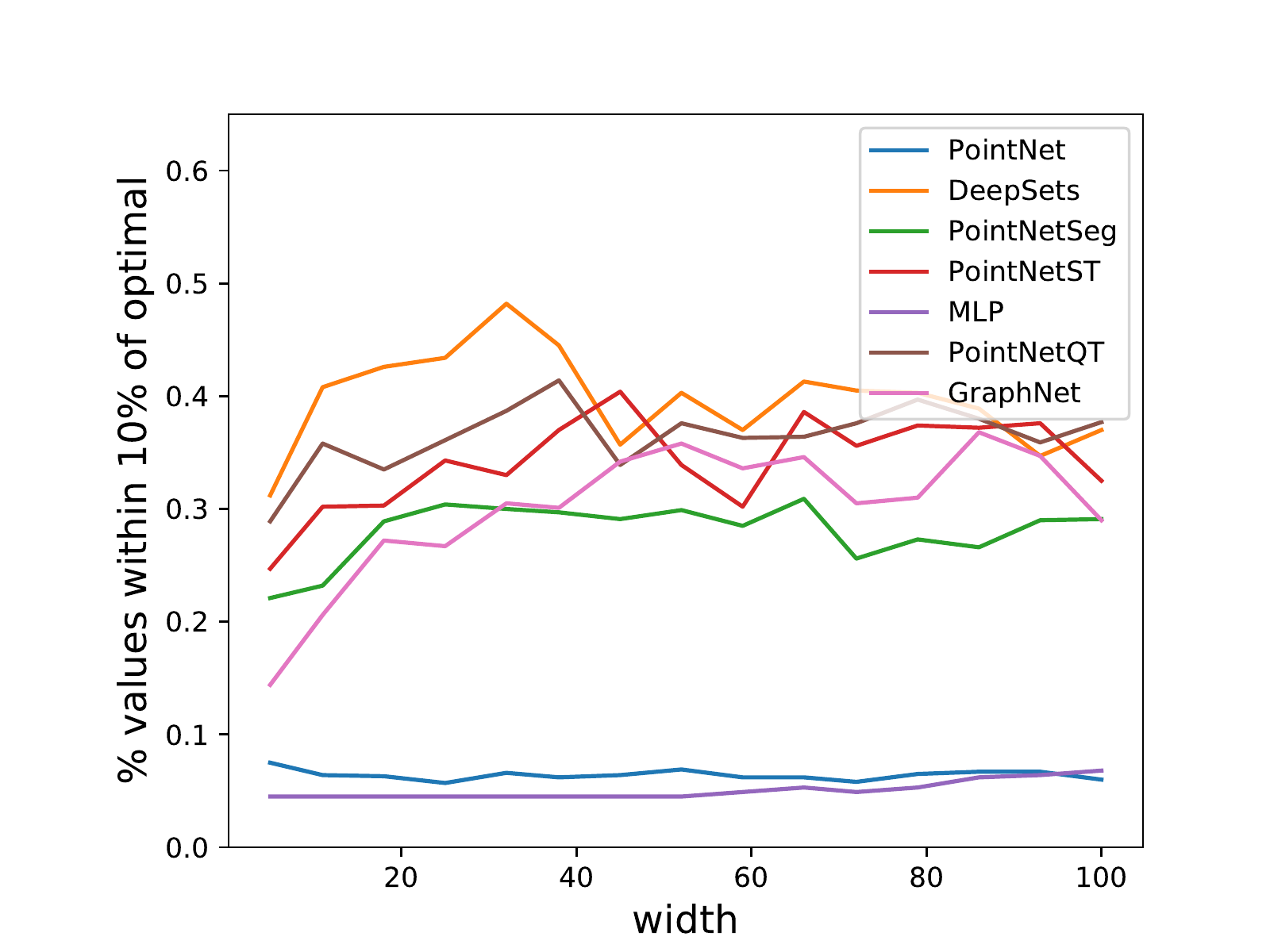} &
			\includegraphics[width=0.31\columnwidth , keepaspectratio]{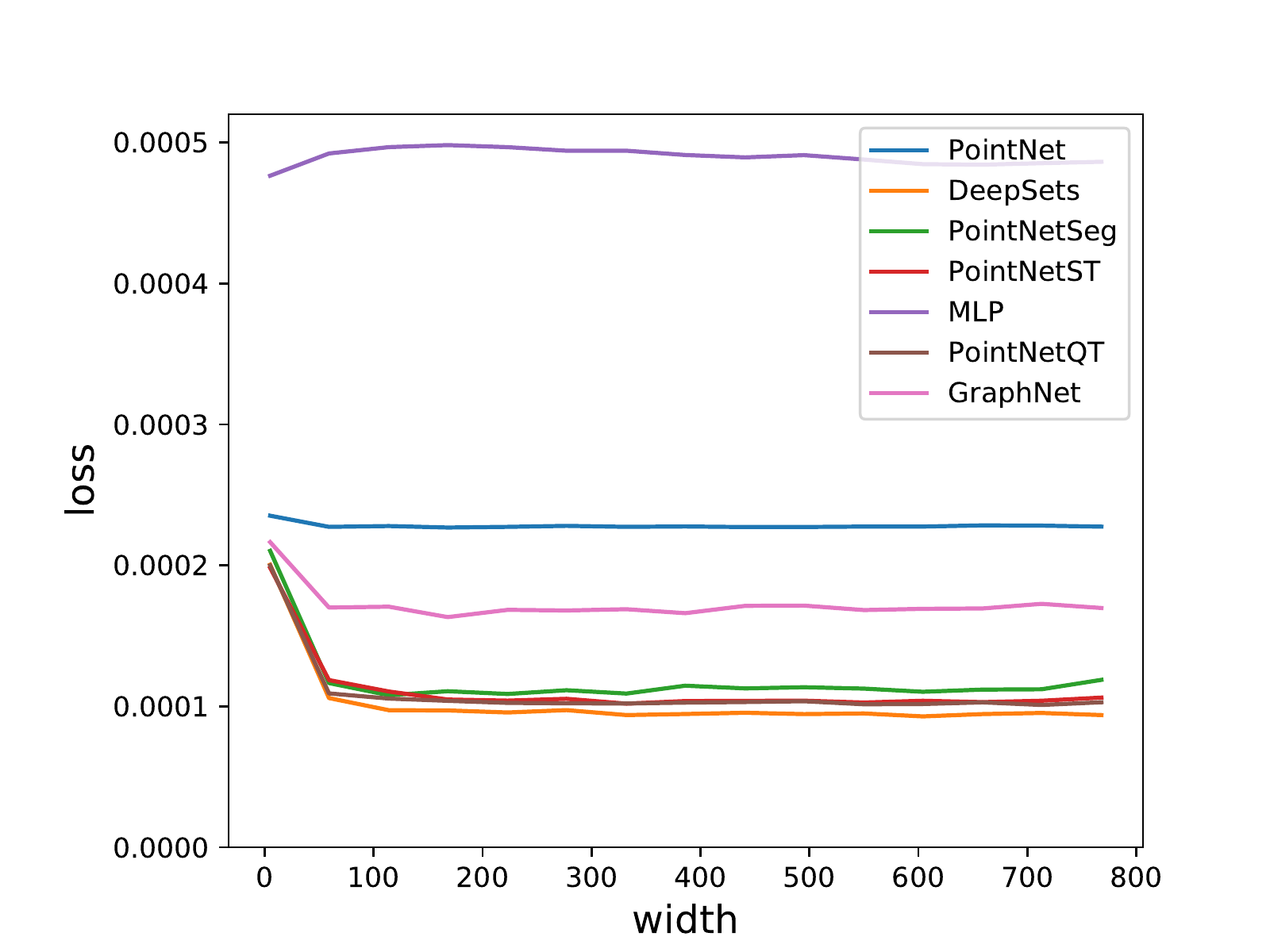} & 
			\includegraphics[width=0.31\columnwidth, keepaspectratio]{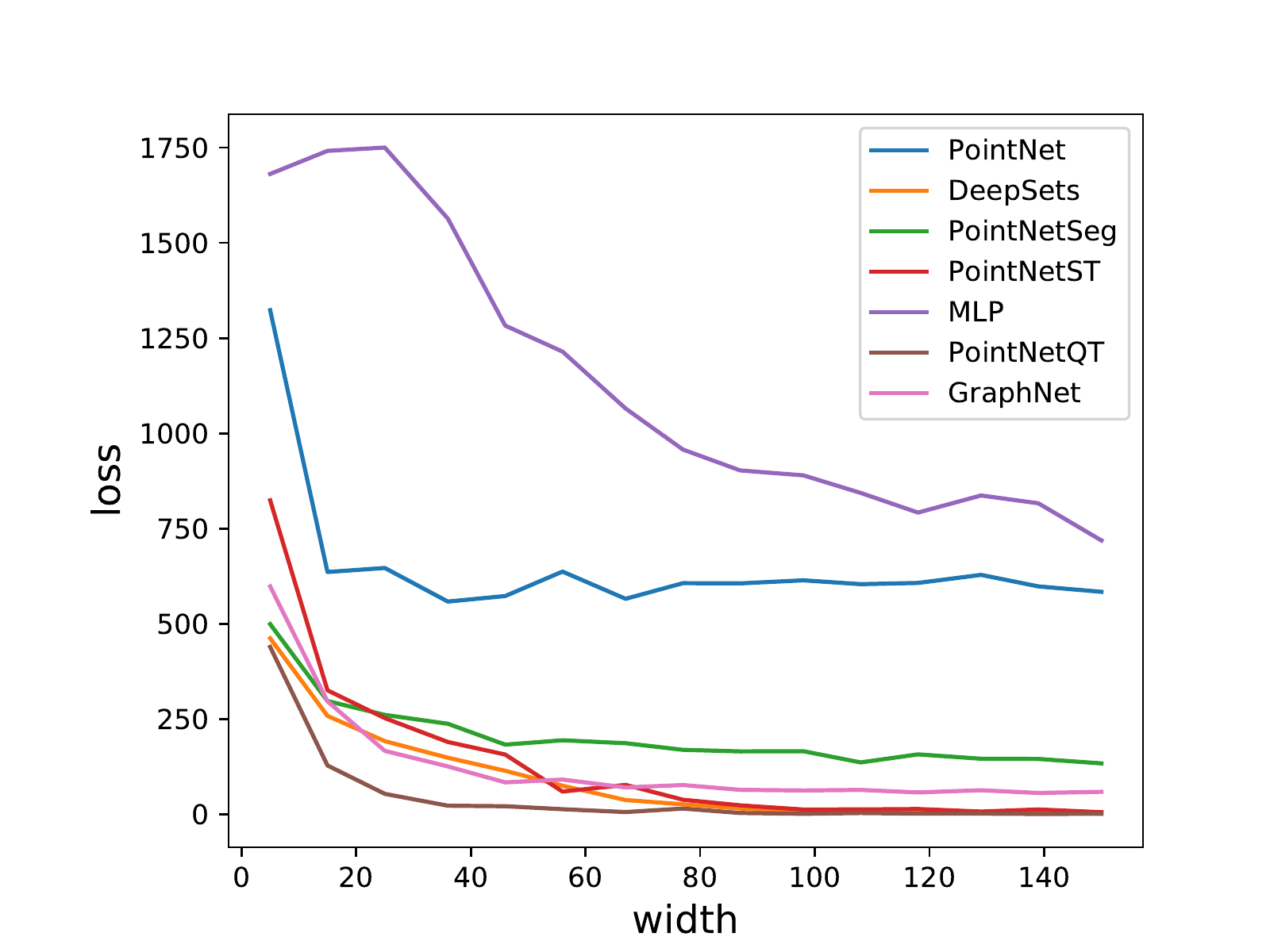} \\
			\scriptsize Knapsack train & \scriptsize Fiedler train& \scriptsize $\sum_{x\in \mX}(x-\frac{1}{2})^2$ train\\
			\includegraphics[width=0.31\columnwidth,keepaspectratio]{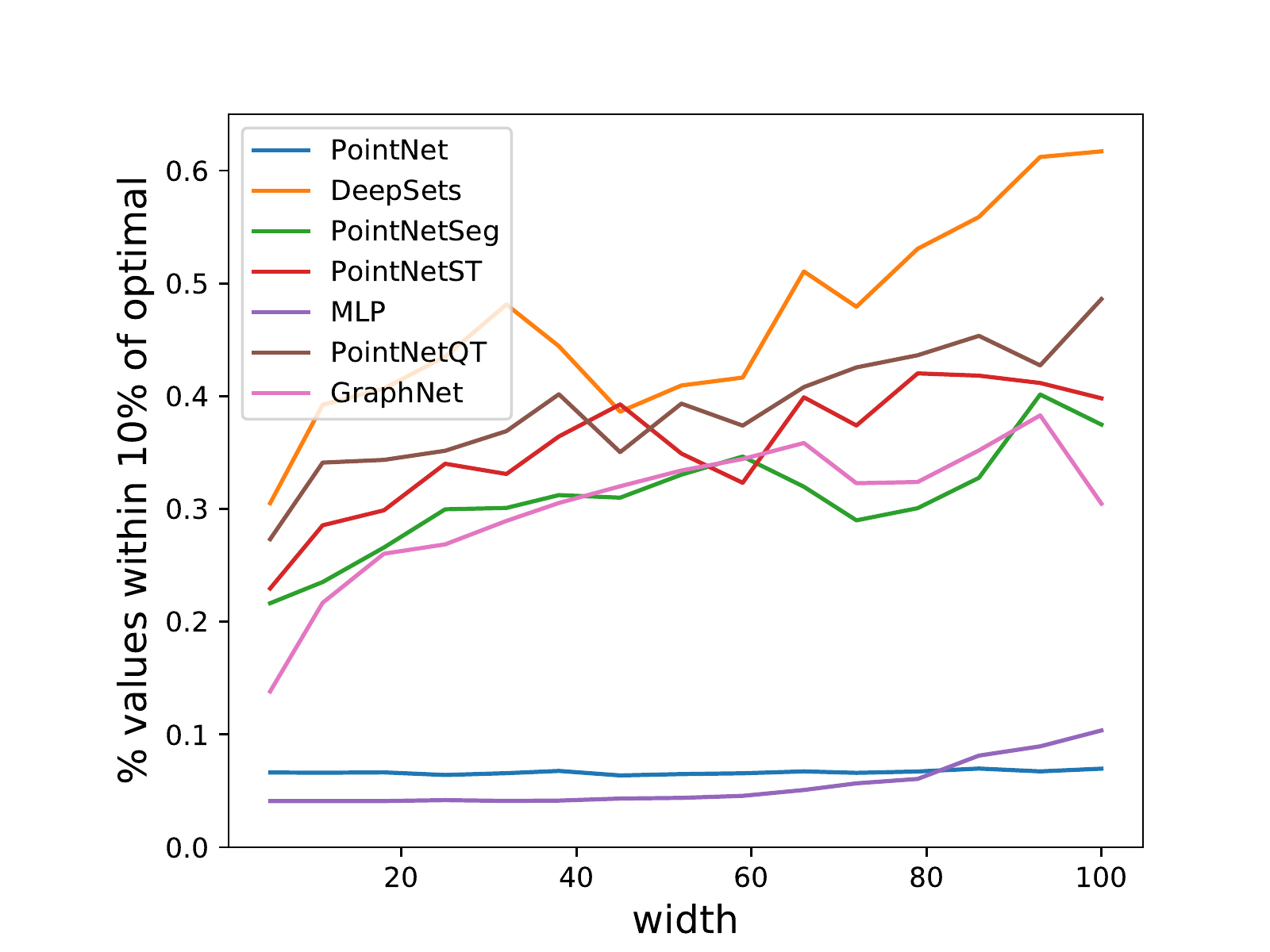} & 
			\includegraphics[width=0.31\columnwidth,  keepaspectratio]{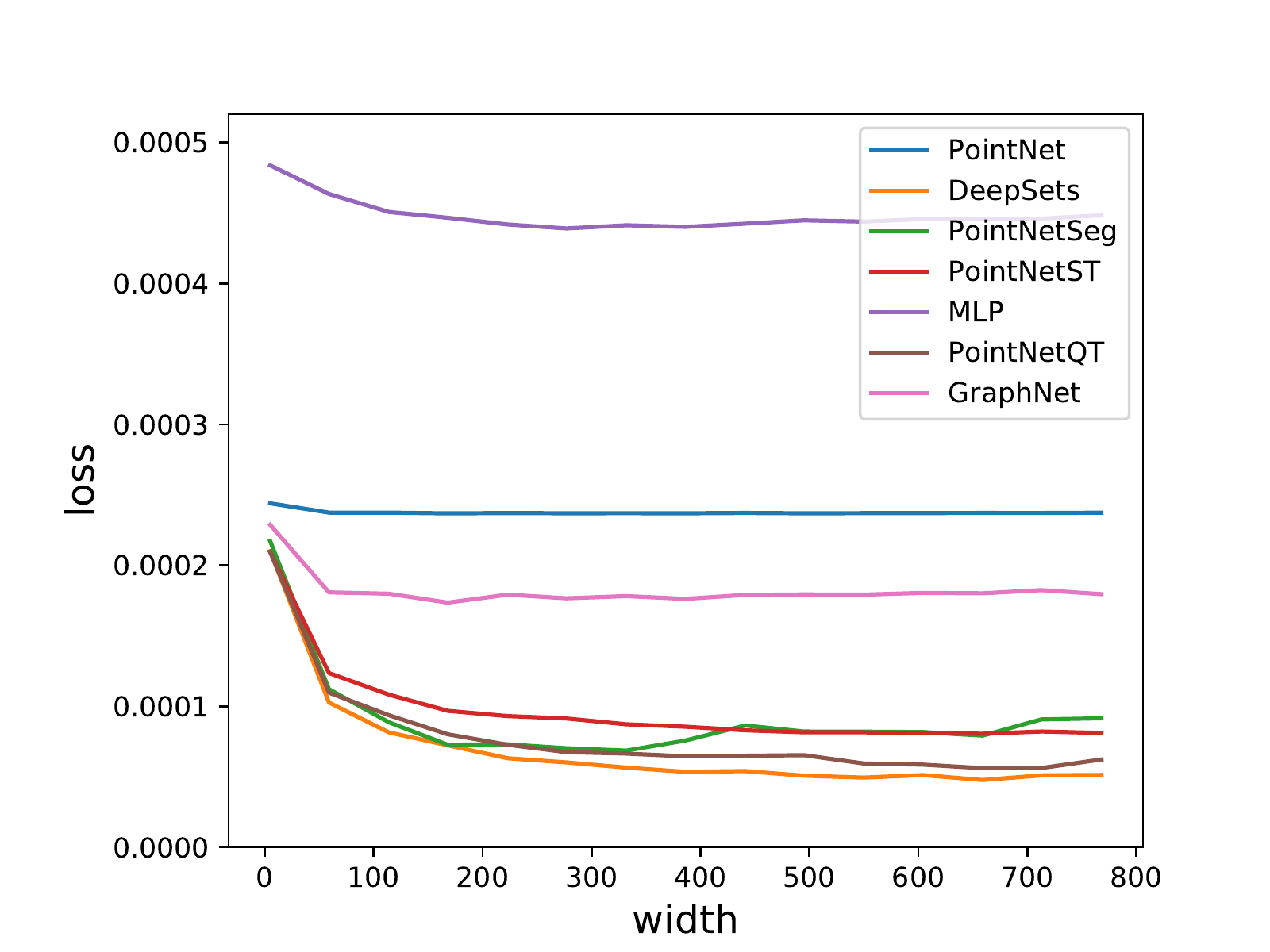} &
			\includegraphics[width=0.31\columnwidth, keepaspectratio]{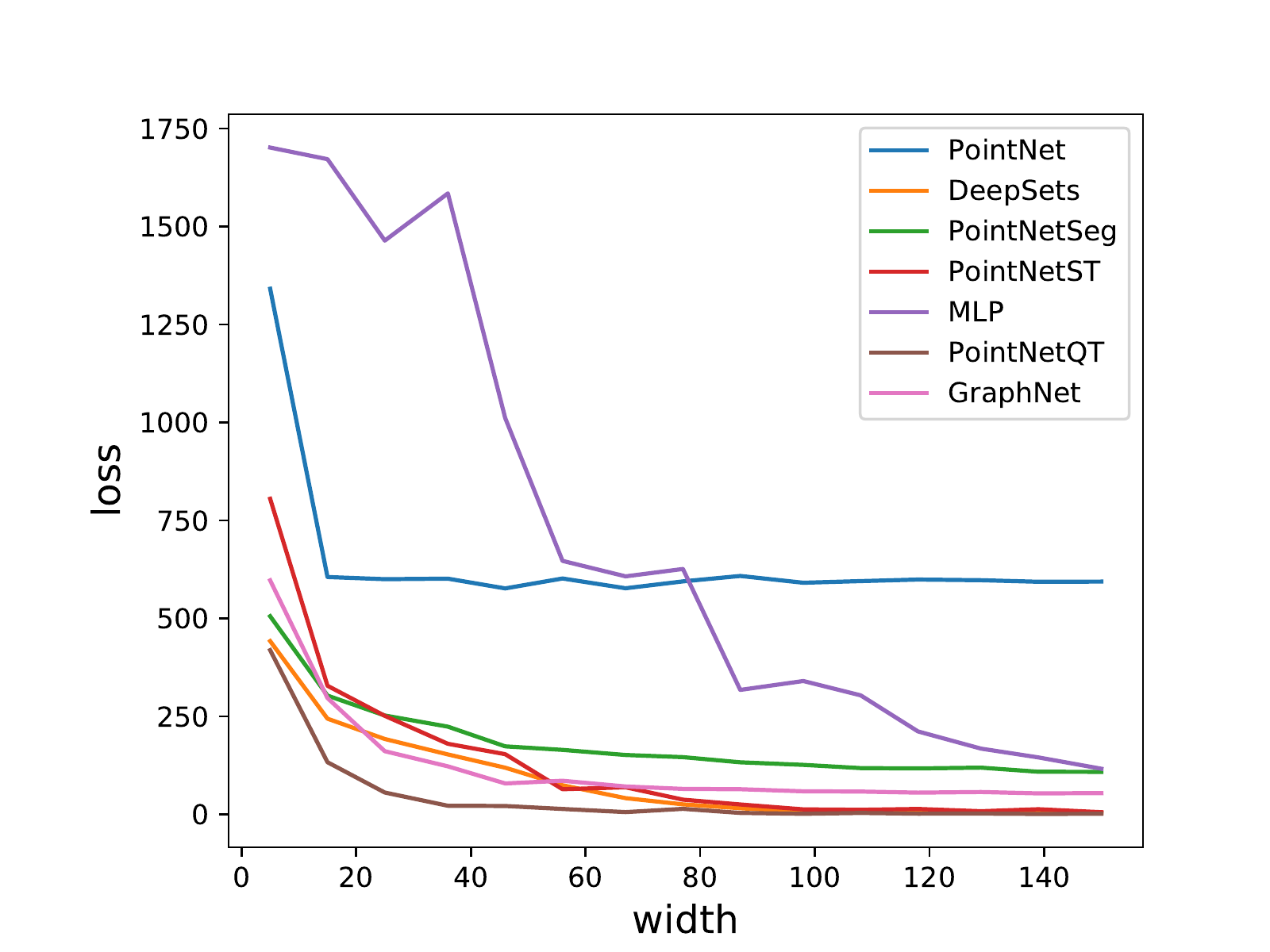} \\
			\hspace{3pt}
		\end{tabular}
		\caption{ Classification and regression tasks with permutation equivariant models. 
			%
			%
			All the universal permutation equivariant models perform similarly, while the equivariant non-universal PointNet demonstrates reduced  performace consistently; MLP baseline (with the same number of parameters as the equivariant models) performs poorly. }
		\label{fig:graphs_width_vs_acc}
	\end{figure}

	\paragraph{Result summary.}	Figure \ref{fig:graphs_width_vs_acc} summarizes train and test accuracy of the 6 models after training (training details in Appendix \ref{appenb}) as a function of the network width $\omega$. We have tested 15 $\omega$ values equidistant in $[5,\frac{nk_{in}}{2}]$. 
	
	As can be seen in the graphs, in all three datasets the equivariant universal models (PointNetST, PointNetQT , DeepSets, PointNetSeg) achieved comparable accuracy. PointNet, which is not equivariant universal, consistently achieved inferior performance compared to the universal models, as expected by the theory developed in this paper. The non-equivariant MLP, although universal, used the same width (\ie, same number of parameters) as the equivariant models and was able to over-fit only on one train set (the quadratic function); its performance on the test sets was inferior by a large margin to the equivariant models. We also note that in general the GraphNet model achieved comparable results to the equivariant universal models but was still outperformed by the DeepSets model.
	
	An interesting point is that although the width used in the experiments in much smaller than the bound $k_{out}+k_{in}+{n+k_{in} \choose k_{in}}$ established by Theorem \ref{thm:main}, the universal models are still able to learn well the functions we tested on. This raises the question of the tightness of this bound, which we leave to future work. 
	

	\vspace{-0.4cm}
	
	\section{Conclusions}
	\vspace{-0.3cm}
	
	In this paper we analyze several set equivariant neural networks and compare their approximation power. We show that while vanilla PointNet \citep{qi2017pointnet} is not equivariant universal, adding a single linear transmission layer makes it equivariant universal. Our proof strategy is based on a characterization of polynomial equivariant functions. As a corollary we show that the DeepSets model \citep{zaheer2017deep} and  PointNetSeg \citep{qi2017pointnet} are equivariant universal. Experimentally, we tested the different models on several classification and regression tasks finding that adding a single linear transmitting layer to PointNet makes a significant positive impact on performance.

	\vspace{-8pt}
	\section{Acknowledgements} \vspace{-5pt}
	This research was supported in part by the European Research Council (ERC Consolidator Grant, ”LiftMatch” 771136) and the Israel Science Foundation (Grant No. 1830/17).
	

	\bibliography{iclr2020_conference}
	\bibliographystyle{iclr2020_conference}

	\appendix

	\section{Approximating graph convolution layer with DeepSets} \label{app:aproxgcn}

	
	To test the ability of an equivariant universal model to approximate a graph convolution layer, we conducted an experiment where we applied a single graph convolution layer  (see Appendix \ref{append:nets} for a full description of the graph convolution layers used in this paper) with $3$ in features and $10$ out features. We constructed a knn graph by taking $10$ neighbors. We sampled $1000$ examples in $\R^{100\times 3}$ i.i.d from a $\gN(\frac{1}{2}, 1)$ distribution (per entry of $\mX$).  The results are summarized in Figure \ref{fig:regress_to_gcn_layer}. We regressed to the output of a graph convolution layer using the smooth $L_1$ loss. 
	
	\begin{figure} [h] 
		\begin{tabular}{cc} 
			\includegraphics[width=0.45\columnwidth,keepaspectratio]{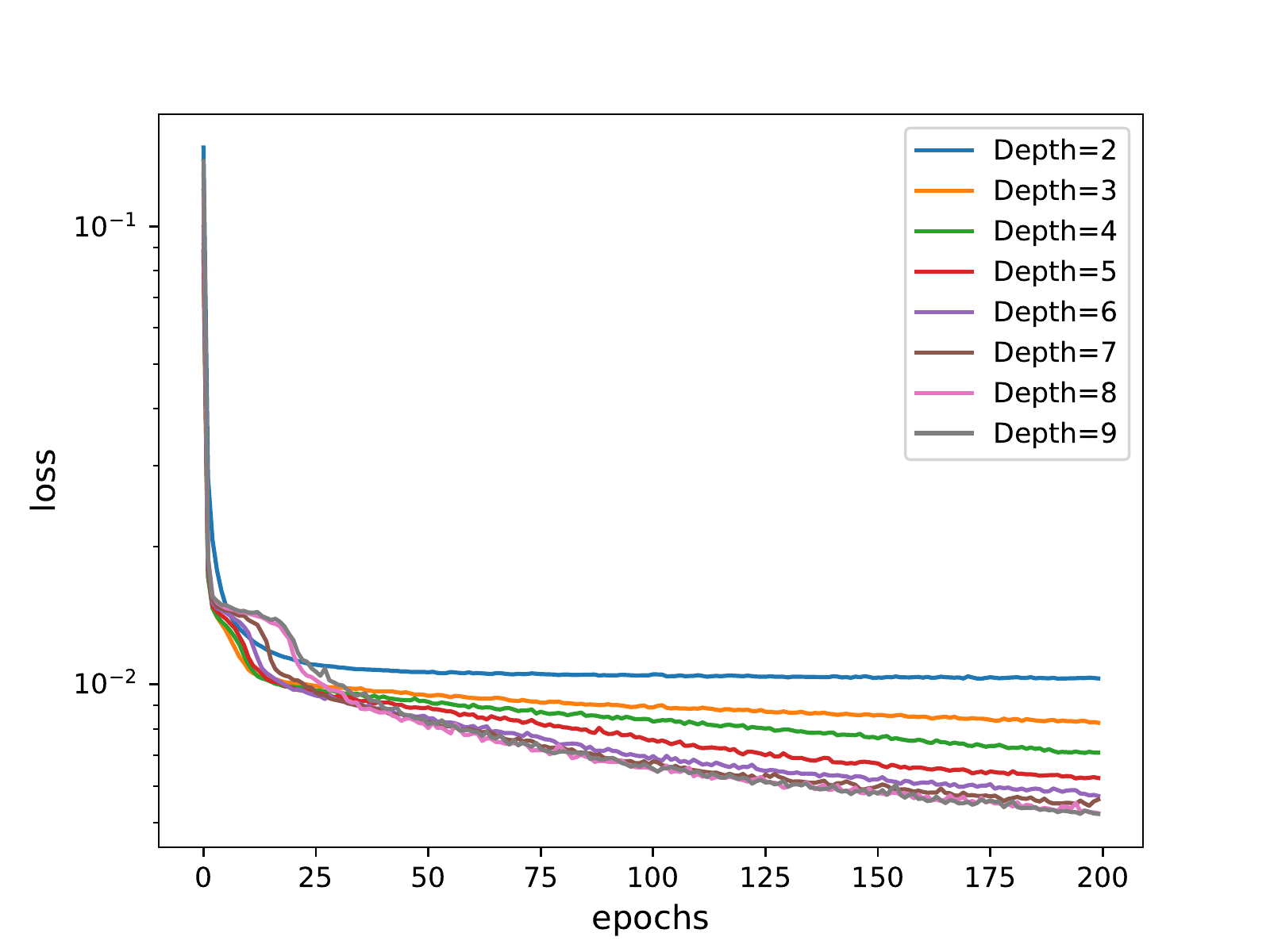} &	\includegraphics[width=0.45\columnwidth,keepaspectratio]{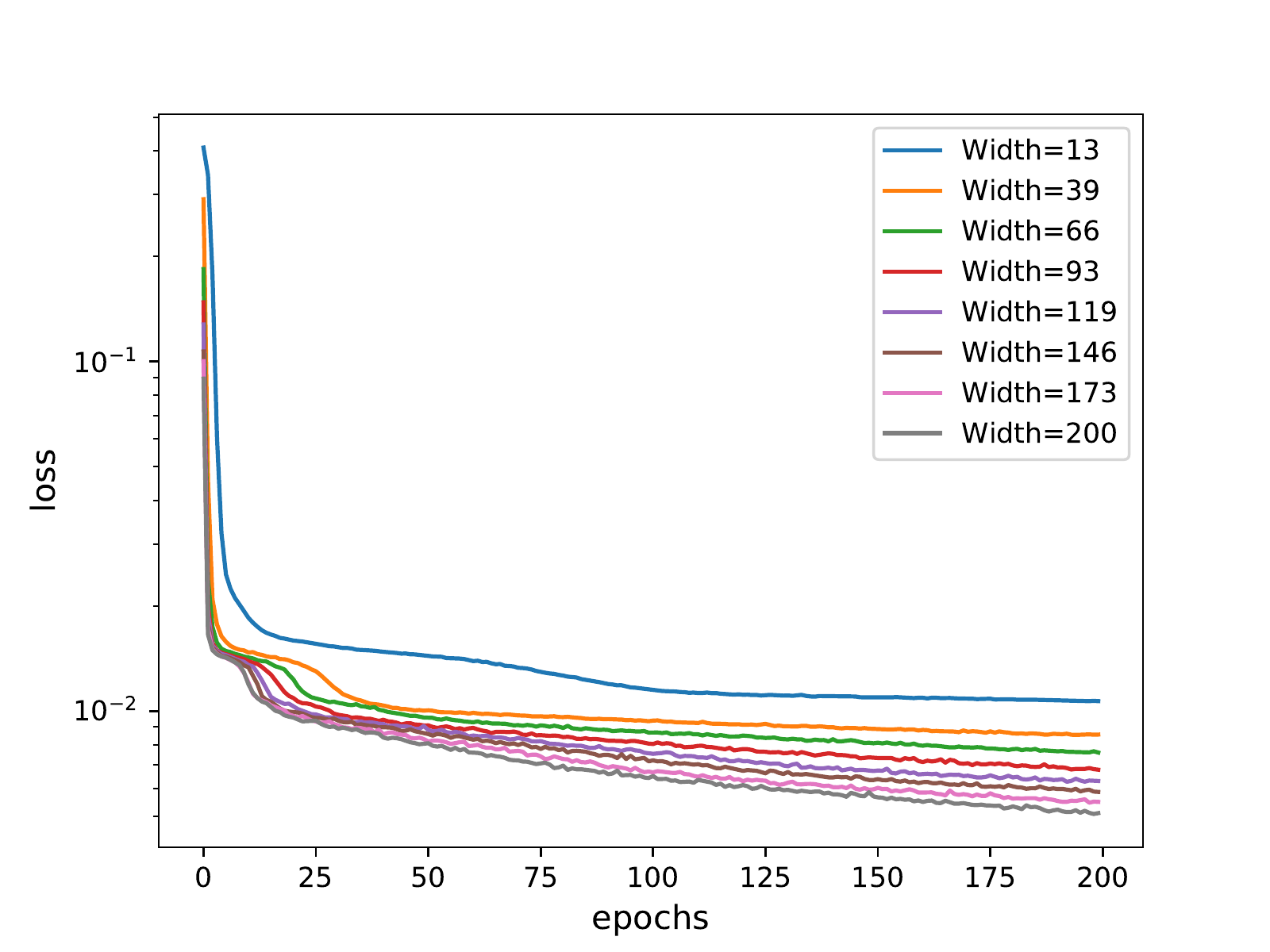} 
		\end{tabular}
		\caption{Using DeepSets to regress to a single graph convolution layer. In the left image we varied the depth and took a constant width of $200$. In the right we varied the width and took a fixed depth of $6$ layers. The $y$ axes are in log scale. Note that even  a $2$ layer DeepSets can approximate a graph convolution layer up to an error of $\sim0.01$. }\label{fig:regress_to_gcn_layer}
	\end{figure}

	\section{Description of layers} \label{append:nets}
	\subsection{Quadratic layer}
	One potential application of Theorem \ref{thm:decomposition} is augmenting an equivariant neural network (\Eqref{e:f}) with equivariant polynomial layers $\mP:\Real^{n\times k}\too \Real^{n\times l}$ of some maximal degree $d$. This can be done in the following way: look for all solutions to $\alpha,\beta_1,\beta_2,\ldots\in\Nat^k$ so that $\abs{\alpha}+\sum_{i}\abs{\beta_i}\leq d$. Any solution to this equation will give a basis element of the form $\vp(\mX)=\ceil{\vx_1^\alpha}\prod_{j}\parr{\sum_{i=1}^n \vx_i^{\beta_j}}$. 
	
	In the paper we tested PointNetQT, an architecture that adds to PointNet a single quadratic equivariant layer. We opted to use only the quadratic transmission operators: For a matrix $ \mX \in \Real^{n \times k}$ we define $L(\mX) \in \Real^{n\times k}$ as follows:
	\begin{align*}
	L(\mX) =  \mX \mW_1 + \1\1^T \mX \mW_2 + (\1 \1^T \mX)\odot(\1 \1^T \mX) \mW_3 +  (\mX \odot\mX) \mW_4 + ( \1 \1^T\mX) \odot \mX \mW_5, 
	\end{align*}\label{eq:polynomial_layer}
	where $\odot$ is a point-wise multiplication and $\mW_i\in \Real^{n\times k}, i\in[5]$ are the learnable parameters.  
	
	\subsection{Graph convolution layer}
	
	We implement a graph convolution layers as follows $$\mL(\mX) = \mB\mX\mW_2+ \mX\mW_1 + \1\vc^T $$ with $\mW_1, \mW_2, \vc$ learnable parameters. The matrix  $\mB$ is defined as in \cite{kipf2016semi} from the knn graph of the set. $\mB = \mD^{-\frac{1}{2}}\mA\mD^{-\frac{1}{2}}$ where $\mD$ is the degree matrix of the graph and $\mA$ is the adjacency matrix of the graph with added self-connections.

	\section{Implementation details} \label{appenb}

	\paragraph{Knapsack data generation.} We constructed a dataset of $10k$ training examples and $1k$ test examples consisting of $50\times 4$ matrices. We took $w_1=100$, $w_2 =80$, $w_3=50$. To generate $\mX\in\Real^{50\times 4}$, we draw an integer uniformly at random between $1$ and $100$ and randomly choose $50$ integers between $1$ as the first column of $\mX$. We also randomly chose an integer between $1$ and $25$ and then randomly chose $150$ integers in that range as the three last columns of $\mX$. The labels for each input $\mX$ were computed by a standard dynamic programming approach, see \cite{Martello:1990:KPA:98124}.
	
	\paragraph{Optimization.} We implemented the experiments in Pytorch \cite{paszke2017automatic} with the Adam \cite{kingma2014adam} optimizer for learning. For the classification we used the cross entropy loss and trained for 150 epochs with learning rate 0.001, learning rate decay of 0.5 every 100 epochs and batch size 32. For the quadratic function regression we trained for 150 epochs with leaning rate of 0.001, learning rate decay 0.1 every 50 epochs and batch size 64; for the regression to the leading eigen vector we trained for 50 epochs with leaning rate of 0.001 and batch size 32.  To regress to the output of a single graph convolution layer we trained for 200 epochs with leaning rate of 0.001 and batch size 32.
	

\end{document}